\documentclass{article}

    \PassOptionsToPackage{numbers, sort&compress}{natbib}
\usepackage[preprint]{neurips_2026}

\usepackage[utf8]{inputenc} 
\usepackage[T1]{fontenc}    
\usepackage{hyperref}       
\usepackage{url}            
\usepackage{booktabs}       
\usepackage{amsfonts}       
\usepackage{nicefrac}       
\usepackage{microtype}      
\usepackage{xcolor}         
\usepackage{adjustbox}
\usepackage{amsmath}
\usepackage{amssymb}
\usepackage{mathtools}
\usepackage{amsthm}
\usepackage{bm}
\usepackage{graphicx}
\usepackage{float}
\usepackage{subcaption}
\usepackage{array}
\usepackage{multirow}
\usepackage{tcolorbox}
\usepackage{algorithm}
\usepackage{algorithmic}

\usepackage[capitalize,noabbrev]{cleveref}
\tcbuselibrary{skins,breakable}

\graphicspath{{fig/}}

\newcommand{\R}{\mathbb{R}}
\newcommand{\E}{\mathbb{E}}
\newcommand{\Ent}{\mathrm{H}}

\newcommand{\Tr}{\mathrm{Tr}}
\newcommand{\dd}{\mathrm{d}}
\newcommand{\vx}{\bm{x}}

\newcommand{\vv}{\bm{v}}
\newcommand{\vu}{\bm{u}}
\newcommand{\veps}{\bm{\epsilon}}

\newcommand{\cm}{\mathrm{cm}}
\newcommand{\bcr}{\mathrm{BCR}}
\newcommand{\diver}{\nabla\!\cdot}

\newcolumntype{P}[1]{>{\raggedright\arraybackslash}p{#1}}

\newcommand{\inlineeqtag}[1]{\refstepcounter{equation}\label{#1}\nobreak\hfill\textup{(\theequation)}\par}

\providecommand{\edmfidcell}[2]{%
  \mbox{#1\,{\tiny $\pm\,#2$}}%
}
\providecommand{\edmbestfidcell}[2]{%
  \begingroup\setlength{\fboxsep}{0.8pt}\colorbox{yellow!30}{\edmfidcell{#1}{#2}}\endgroup%
}
  \providecommand{\alphaplddtcell}[2]{%
    \mbox{#1\,{\fontsize{3.5pt}
  {5pt}\selectfont $\pm\,#2$}}%
  }
  \providecommand{\alphaplddtcellbold}[2]{%
    \mbox{\textbf{#1}\,{\fontsize{3.5pt}
  {5pt}\selectfont $\pm\,#2$}}%
  }
\providecommand{\alphabestplddtcell}[2]{%
  \begingroup\setlength{\fboxsep}{0.8pt}\colorbox{yellow!30}{\alphaplddtcell{#1}{#2}}\endgroup%
}
\providecommand{\alphabestplddtcellbold}[2]{%
  \begingroup\setlength{\fboxsep}{0.8pt}\colorbox{yellow!30}{\alphaplddtcellbold{#1}{#2}}\endgroup%
}

\newcolumntype{P}[1]{>{\raggedright\arraybackslash}p{#1}}

\theoremstyle{plain}
\newtheorem{theorem}{Theorem}
\newtheorem{proposition}{Proposition}

\theoremstyle{definition}
\newtheorem{assumption}{Assumption}
\theoremstyle{remark}

\newtcolorbox{computebox}{
  enhanced,
  breakable,
  colback=white,
  colframe=white,
  boxrule=0pt,
  borderline={0.5pt}{0pt}{gray!55,dotted},
  arc=1.5pt,
  left=5pt,
  right=5pt,
  top=4pt,
  bottom=4pt
}

\tcolorboxenvironment{proposition}{
  enhanced,
  breakable,
  colback=gray!7,
  colframe=gray!45,
  boxrule=0.5pt,
  arc=1.5pt,
  left=5pt,
  right=5pt,
  top=4pt,
  bottom=4pt
}

\tcolorboxenvironment{assumption}{
  enhanced,
  breakable,
  colback=white,
  colframe=black,
  boxrule=0.35pt,
  arc=2pt,
  borderline west={2pt}{0pt}{gray!45},
  left=6pt,
  right=5pt,
  top=4pt,
  bottom=4pt
}

\title{Entropy Across the Bridge: Conditional--Marginal Discretization for Flow and Schr\"odinger Samplers}

\author{%
  \normalfont\mdseries
  \textbf{Bruno Trentini}$^{1,2}$\thanks{Correspondence to: \texttt{brunod@nvidia.com}, \texttt{luca.ambrogioni@donders.ru.nl}} \qquad
  \textbf{Dejan Stancevic}$^{3}$ \qquad
  \textbf{Michael M.~Bronstein}$^{2,4}$ \\[0.5em]
  \textbf{Alexander Tong}$^{4}$ \qquad
  \textbf{Luca Ambrogioni}$^{3*}$ \\[1em]
  \small
  $^{1}$NVIDIA Corporation, Santa Clara, CA, USA \\
  $^{2}$University of Oxford, Dept.\ of Computer Science, Oxford, UK \\
  $^{3}$Donders Institute for Brain, Cognition, and Behaviour, Radboud University, Nijmegen, NL \\
  $^{4}$AITHYRA, Research Institute for Biomedical AI, Vienna, AT%
}

\begin{document}

\maketitle

\begin{abstract}
    For a fixed flow-based generative model under a small inference budget, sample quality can depend strongly on where the sampler spends its few function evaluations. Flow matching and Schr\"odinger bridges define probability paths, yet their inference grids are usually heuristic or inherited from one-endpoint diffusion. We derive a conditional--marginal entropy-rate objective for bridge-aware discretization, separating endpoint-conditioned bridge geometry from marginal flow evolution, and use it to build a training-free entropic inference-time scheduler from first principles. For Gaussian Brownian bridges this rate is closed-form and U-shaped, motivating boundary-heavy nonuniform grids. On trained two-dimensional bridge/flow models, the estimated profile recovers the predicted shape and improves 10-step ODE-Heun MMD over linear by 18.1\%, with a paired 22.7\% SDE-Heun improvement in the same low-NFE sweep. On EDM/CIFAR-10, the entropic time-discretization gives the best tested five-step FID ($186.3\pm4.0$ versus $200.5\pm2.9$ for linear and $238.0\pm5.3$ for cosine). On AlphaFlow protein generation, entropic conditional--marginal (cond-marg) scheduling shows advantage in low-NFE regimes on both CAMEO22 and ATLAS benchmarks. These results support entropy-rate scheduling as a practical low-budget allocation signal for high-dimensional bridge and flow samplers.

\end{abstract}

\section{Introduction}
\label{sec:introduction}

Modern generative samplers often define a continuous-time path and then approximate it with a small number of discrete evaluations. Diffusion models, flow matching, stochastic interpolants, and Schr\"odinger bridges all share this numerical bottleneck \citep{ho2020ddpm,kingma2021vdm,karras2022edm,lipman2023flow,albergo2023building,debortoli2021dsb,liu2023i2sb,tong2024simulation}. We use NFE\footnote{NFE counts neural network calls during sampling. It is a useful proxy for latency, energy use, and serving cost because these calls dominate inference in many diffusion, flow, and bridge samplers.} to denote the number of neural network function evaluations performed during sampling. At high NFE, many grids become similar because integration error is small. In the low-NFE regime, a misplaced step can dominate the behavior of the sampler. This issue is no longer only about image synthesis. The same sampling budget appears in protein design, molecular generation, materials discovery, biological trajectory modeling, and other AI-for-science systems where each extra model call can be expensive.


The standard response is to choose a useful time grid. Linear, cosine, sigmoid, power, and log-SNR grids encode different beliefs about where denoising or transport is difficult \citep{nichol2021improved,kingma2021vdm,karras2022edm,lin2024common}. Higher-order solvers and predictor-corrector schemes change the update rule \citep{lu2022dpm,lu2022dpmsolverpp,zhao2023unipc,zhang2022deis,tan2026stork}. Optimized or learned schedulers search for better timesteps or trajectory parameterizations, including Align Your Steps, optimized-step rules, dynamic-transport schedules, and recent few-step scheduler learning \citep{sabour2024alignsteps,xue2024optimized,tsimpos2025lipschitz,min2026bezierflow}. Align Your Steps optimizes a discrete schedule for diffusion sampling, Lipschitz or transport-based rules use smoothness proxies, and learned scheduler families fit parameterized curves. These methods show that the grid matters, albeit they do not by themselves say which bridge quantity should be measured. Scheduling and distillation also solve different problems: distillation changes the model weights, while the present method changes the inference grid and can therefore be combined with a distilled sampler. The distinction matters because the bridge and flow literature has moved beyond one-endpoint image diffusion. Schr\"odinger bridges now appear in simulation-free flow and score matching, image translation, energy-based sampling, biological trajectories, discrete spaces, and structured domains \citep{tong2024simulation,liu2023i2sb,liu2025asbs,tamogashev2026dataenergy,ksenofontov2025categorical,tang2026branched,wyrwal2026topological}. Flow matching has also become central in protein and molecular generation, including AlphaFlow, Proteina, BioEmu, MolFlow, FlowMM, and Wasserstein flow matching \citep{jing2024alphafoldmeetsflowmatching,proteina2025,bioemu2024,irwin2024molflow,miller2024flowmm,haviv2025wasserstein}. In these settings, a sampler is not just reversing a noising process because it transports between structured endpoints, often under geometric constraints.

Recent entropic time work gives a principled schedule for diffusion models by using conditional entropy as a time coordinate \citep{stancevic2025entropic,stancevic2026information}. A Schr\"odinger bridge carries an additional layer: conditional paths indexed by endpoint information and a marginal population flow obtained after those conditions are mixed. A single-field entropy story is therefore too small for the bridge, because the rate must compare how volume changes along the paired conditional path with how volume changes after endpoint mixing. The bridge story needs a two-term object. Let $Z$ denote the bridge condition, such as a data endpoint or endpoint pair, and let $X_t$ denote the state at time $t$. Under a smooth probability-flow description, we show that
\begin{equation}
  \frac{\dd}{\dd t}\Ent(Z\mid X_t)
  =
  \E_{Z,X_t\mid Z}\!\left[\diver \vv_t(X_t\mid Z)\right]
  -
  \E_{X_t}\!\left[\diver \bar{\vv}_t(X_t)\right].
  \label{eq:intro_identity}
\end{equation}
The first term measures conditional volume change along paired bridge paths. The second measures marginal volume change after the bridge conditions have been mixed. Their difference is the conditional--marginal (cond--marg) information-rate signal studied here as a bridge-aware scheduling signal. The proposed grid is obtained by normalizing the magnitude of this signal and inverting its cumulative distribution.
    \begin{figure}[t]
      \centering
      \includegraphics[width=\linewidth]{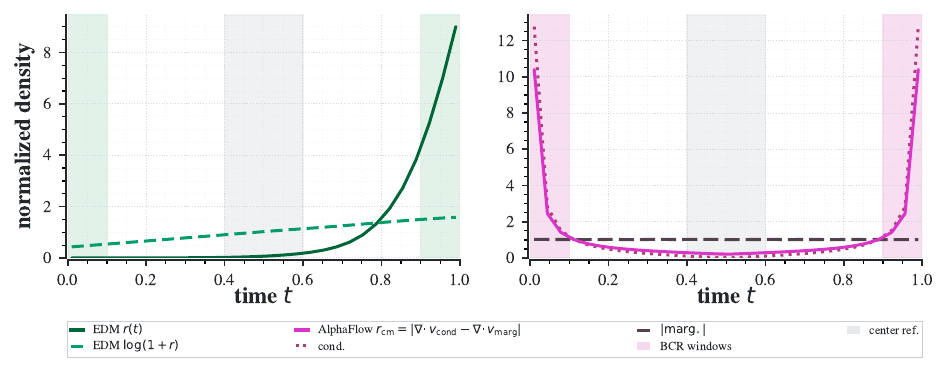}
      \caption{Entropy profiles for EDM and AlphaFlow. EDM is sharply endpoint-concentrated and requires log tempering; AlphaFlow recovers the boundary-heavy cond--marg profile predicted by the Brownian-bridge calculation. Shaded bands mark BCR endpoint windows.}
      \label{fig:entropy_curves}
    \end{figure}\textbf{}
This paper makes five contributions: (a) an entropy-rate criterion for inference-time discretization in flow and bridge samplers; (b) a cond--marg estimator that implements \Cref{eq:intro_identity}; (c) a Brownian-bridge closed form showing why the conditional term is U-shaped and singular near the endpoints; (d) a link between entropy-weighted grids and local ODE error; (e) a staged empirical path from controlled two-dimensional bridge/flow models to high-dimensional samplers.

\section{Preliminaries}
\label{sec:setup}

\textbf{Probability flows.} Let $p_0$ be a data distribution on $\R^d$ and $p_1$ be a reference distribution. A deterministic probability flow is a density path $(p_t)_{t\in[0,1]}$ and vector field $\bar{\vv}_t$ satisfying
\begin{equation}
  \partial_t p_t(\vx) + \diver\big(p_t(\vx)\bar{\vv}_t(\vx)\big)=0.
  \label{eq:continuity}
\end{equation}
Sampling integrates the learned field from the reference endpoint toward data on a decreasing time grid $1=t_0>t_1>\cdots>t_N=0$. This numerical grid is the object we study.

\textbf{Flow matching.} Flow matching often starts from a conditional interpolation. Given a condition $Z$, define a conditional law $p_t(\vx\mid Z)$ and a conditional vector field $\vv_t(\vx\mid Z)$ satisfying
\begin{equation}
  \partial_t p_t(\vx\mid Z) + \diver\big(p_t(\vx\mid Z)\vv_t(\vx\mid Z)\big)=0.
  \label{eq:conditional_continuity}
\end{equation}
The marginal field is the posterior average
  $\bar{\vv}_t(\vx)
  =
  \E\!\left[\vv_t(\vx\mid Z)\mid X_t=\vx\right]$,
whenever the conditional and marginal descriptions are compatible. This is the population field targeted by standard flow matching losses \citep{lipman2023flow,liu2023rectified,albergo2023building}.

\textbf{Optimal transport and Schr\"odinger bridges.} Optimal transport asks for a low-cost coupling between two endpoint distributions. A Schr\"odinger bridge adds stochasticity by selecting the path measure closest in relative entropy to a reference process while matching the same endpoints \citep{schrodinger1931,schrodinger1932,leonard2014schrodinger}. In the Brownian reference case, the bridge path measure can be written as a mixture of Brownian bridges weighted by an entropic optimal transport plan:
\begin{equation}
  P((X_t)_{t\in[0,1]})
  =
  \int Q((X_t)_{t\in[0,1]}\mid x_0,x_1)
  \,\dd \Pi_{2\sigma^2}(x_0,x_1).
  \label{eq:sb_mixture}
\end{equation}
For this case the natural condition is $Z=(X_0,X_1)$. The distinction between the conditional field and the marginal field is not cosmetic. It is the mathematical expression of the bridge constraint.

\textbf{Time grids and NFE.} Given a nonnegative rate $r(t)$, an entropic grid is constructed through
\begin{equation}
  Q(t)=\frac{\int_0^t r(s)\,\dd s}{\int_0^1 r(s)\,\dd s},
  \qquad
  t_k=Q^{-1}\!\left(\frac{k}{N}\right).
  \label{eq:inverse_cdf}
\end{equation}
The number of intervals, or equivalently the number of model evaluations for a one-step solver, is the NFE budget. The central question is what rate should be used for a bridge. The rate derived below is $r(t)=|\frac{\dd}{\dd t}\Ent(Z\mid X_t)|$, optionally transformed by $\log(1+r)$ when the raw signal is too concentrated near singular endpoints. Expanded derivations for the entropy identities, Brownian bridge field, and ODE allocation rule are given in \Cref{app:full_derivations,app:bb_details,app:sde_ode_details,app:ode_error}.

The empirical sections are guided by three questions summarized in \Cref{app:hypotheses}. First, does the Brownian-bridge calculation predict a U-shaped conditional rate, and do small learned two-dimensional bridge probes show the same boundary-heavy pattern? Second, does the cond--marginal estimator reveal bridge-specific allocation in a high-dimensional model? Third, when the rate is converted into a grid, does it help low-NFE sampling without being confused with endpoint quality or model training effects?



\section{Conditional--marginal entropy rate}
\label{sec:theory}

The bridge constraint asks for more than the entropy of the current marginal state. A conditional bridge path can contract because it is resolving a particular endpoint pair, while the population flow can expand or contract differently after those endpoint pairs are averaged. A schedule based on only one of these effects confounds path geometry with population geometry. The derivation below isolates the difference. It uses the standard regularity needed to differentiate entropy through a continuity equation.

\begin{assumption}
\label{ass:smooth}
All conditional and marginal densities are smooth and positive on their support, the relevant vector fields are continuously differentiable in $\vx$, and boundary terms vanish under integration by parts. The conditional and marginal paths satisfy \Cref{eq:continuity,eq:conditional_continuity}.
\end{assumption}



\begin{theorem}[Conditional--marginal entropy identity]
    \label{thm:cm_identity}
    Under \Cref{ass:smooth}, \(\displaystyle \frac{\dd}{\dd t}\Ent(Z\mid X_t)=\E_{Z,X_t\mid Z}\!\left[\diver \vv_t(X_t\mid Z)\right]-\E_{X_t}\!\left[\diver \bar{\vv}_t(X_t)\right]\). \inlineeqtag{eq:cm_identity}
    \end{theorem}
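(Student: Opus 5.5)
We will use the chain rule for differential entropy to write $\Ent(Z\mid X_t)=\Ent(X_t\mid Z)+\Ent(Z)-\Ent(X_t)$. Since the law of $Z$ does not depend on $t$, $\Ent(Z)$ is constant and drops out of the derivative. This leaves
\begin{equation*}
\frac{\dd}{\dd t}\Ent(Z\mid X_t)=\frac{\dd}{\dd t}\Ent(X_t\mid Z)-\frac{\dd}{\dd t}\Ent(X_t).
\end{equation*}
This is the formal step. If $Z$ is continuous, such as an endpoint pair, $\Ent(Z)$ may be a differential entropy or even infinite. To avoid relying on it, we will instead work with the mutual-information form $\Ent(Z\mid X_t)=\Ent(Z)-I(Z;X_t)$. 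We then differentiate $I(Z;X_t)=\E_Z\!\left[\int p_t(\vx\mid Z)\log\frac{p_t(\vx\mid Z)}{p_t(\vx)}\,\dd\vx\right]$ directly, which needs only the conditional and marginal densities of $X_t$.

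The key lemma is the entropy-rate identity along a continuity equation. If $\partial_t q_t+\diver(q_t\vu_t)=0$ and the conditions of \Cref{ass:smooth} hold, then
\begin{equation*}
\frac{\dd}{\dd t}\Big(-\!\int q_t\log q_t\,\dd\vx\Big)=\int q_t\,\diver\vu_t\,\dd\vx .
\end{equation*}
We prove it by differentiating under the integral:
\begin{equation*}
-\!\int(\partial_t q_t)(1+\log q_t)\,\dd\vx .
\end{equation*}
The constant term vanishes because mass is conserved. Substituting the continuity equation and integrating by parts once gives
\begin{equation*}
-\!\int q_t\,\vu_t\cdot\nabla\log q_t\,\dd\vx=-\!\int\vu_t\cdot\nabla q_t\,\dd\vx .
\end{equation*}
A second integration by parts turns this into $\int q_t\,\diver\vu_t\,\dd\vx$. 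Both boundary terms vanish by \Cref{ass:smooth}.

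We apply the lemma twice. First, applied to \Cref{eq:conditional_continuity} for each fixed $Z$ and then averaged over $Z$, it gives $\frac{\dd}{\dd t}\Ent(X_t\mid Z)=\E_{Z,X_t\mid Z}[\diver\vv_t(X_t\mid Z)]$. Second, applied to \Cref{eq:continuity}, it gives $\frac{\dd}{\dd t}\Ent(X_t)=\E_{X_t}[\diver\bar\vv_t(X_t)]$. Subtracting the two yields \Cref{eq:cm_identity}. In the mutual-information route, the cross term $\E_Z\!\int p_t(\vx\mid Z)\log p_t(\vx)\,\dd\vx=\int p_t\log p_t\,\dd\vx$ reduces to exactly the marginal entropy. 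The same computation therefore goes through without ever differentiating $\Ent(Z)$.

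The compatibility condition $\bar\vv_t=\E[\vv_t\mid X_t]$ is not needed for the marginal term itself; only \Cref{eq:continuity} is used there. As a consistency check, however, we note that averaging \Cref{eq:conditional_continuity} over $Z$ reproduces \Cref{eq:continuity} with exactly this $\bar\vv_t$.

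The main obstacle is justifying two interchanges: differentiation under the integral, and the $Z$-average of the conditional identity, which needs uniform integrability in $Z$. We handle this by invoking the smoothness, positivity and vanishing-boundary clauses of \Cref{ass:smooth}, read as including the domination needed for Leibniz's rule. A further subtlety is that conditional bridge densities become degenerate at $t\in\{0,1\}$. We therefore state the identity for $t\in(0,1)$, where \Cref{ass:smooth} applies.
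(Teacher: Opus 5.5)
Your proof is correct and follows the paper's argument. The paper also uses the chain rule $\Ent(Z\mid X_t)=\Ent(X_t\mid Z)+\Ent(Z)-\Ent(X_t)$ with $\Ent(Z)$ constant, and the same entropy-rate lemma along a continuity equation (mass conservation removes the constant term, then two integrations by parts), applied once conditionally and averaged over $Z$ and once marginally. Your mutual-information reformulation, the observation that compatibility of $\bar{\vv}_t$ is not needed, and the restriction to $t\in(0,1)$ are sound refinements that the paper leaves implicit. One small slip: the cross term $\int p_t\log p_t\,\dd\vx$ equals $-\Ent(X_t)$, not $\Ent(X_t)$.
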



Use \(\Ent(Z\mid X_t)=\Ent(X_t\mid Z)+\Ent(Z)-\Ent(X_t)\). Since \(\Ent(Z)\) is constant and the continuity equation gives \(\frac{\dd}{\dd t}\Ent(X_t)=\E_{X_t}[\diver\bar{\vv}_t(X_t)]\), with the same calculation conditionally giving \(\frac{\dd}{\dd t}\Ent(X_t\mid Z)=\E_{Z,X_t\mid Z}[\diver\vv_t(X_t\mid Z)]\), substitution yields \Cref{eq:cm_identity}. Full integration-by-parts details are in \Cref{app:full_derivations}.

The theorem is useful because it separates endpoint-specific volume change from population volume change. In a bridge, the conditional field may expand or contract because it is resolving endpoint information, while the marginal field may show a different expansion after averaging over endpoints. The conditional--marginal rate is the difference.


\begin{proposition}[Score form of the same rate]
\label{prop:score_form}
Under the assumptions of \Cref{thm:cm_identity}, \\ \(\displaystyle \frac{\dd}{\dd t}\Ent(Z\mid X_t)=-\E_{Z,X_t\mid Z}[(\nabla_{\vx}\log p_t(X_t\mid Z)-\nabla_{\vx}\log p_t(X_t))^\top\vv_t(X_t\mid Z)]\). \inlineeqtag{eq:score_form}
\end{proposition}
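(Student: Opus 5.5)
The plan is to start from \Cref{thm:cm_identity} and turn each divergence expectation into a score--velocity pairing by integrating by parts, using the boundary conditions of \Cref{ass:smooth}. The key elementary identity is that for any smooth positive density $q$ and $C^1$ field $\vu$ with vanishing boundary terms,
\[
\E_{X\sim q}\!\left[\diver \vu(X)\right]=\int q\,\diver\vu\,\dd\vx=-\int \nabla q^\top\vu\,\dd\vx=-\E_{X\sim q}\!\left[\nabla\log q(X)^\top\vu(X)\right].
\]

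First, I apply this to the conditional term for each fixed $Z$, with $q=p_t(\cdot\mid Z)$ and $\vu=\vv_t(\cdot\mid Z)$. Averaging over $Z$ gives
\[
\E_{Z,X_t\mid Z}[\diver\vv_t(X_t\mid Z)]=-\E_{Z,X_t\mid Z}\!\left[\nabla_{\vx}\log p_t(X_t\mid Z)^\top\vv_t(X_t\mid Z)\right].
\]

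Second, I apply the same identity to the marginal term with $q=p_t$ and $\vu=\bar\vv_t$. This gives $-\E_{X_t}[\nabla\log p_t(X_t)^\top\bar\vv_t(X_t)]$.

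The main step is to rewrite this marginal expression as a joint expectation involving the conditional field. Since $\nabla\log p_t(X_t)$ is a function of $X_t$ alone, I use the tower property together with the definition $\bar\vv_t(\vx)=\E[\vv_t(\vx\mid Z)\mid X_t=\vx]$. This yields
\[
\E_{X_t}\!\left[\nabla\log p_t(X_t)^\top\bar\vv_t(X_t)\right]=\E_{Z,X_t\mid Z}\!\left[\nabla\log p_t(X_t)^\top\vv_t(X_t\mid Z)\right].
\]
The joint law of $(Z,X_t)$ is the same whether it is sampled as $Z$ then $X_t\mid Z$, or as $X_t$ then $Z\mid X_t$.

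Finally, I subtract the marginal term from the conditional term as in \Cref{thm:cm_identity}. Both are now expectations under the same joint law, so they combine into the single expectation in \Cref{eq:score_form}, and the signs match.

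The only delicate point I anticipate is integrability. The tower-property exchange needs $\nabla\log p_t(X_t)^\top\vv_t(X_t\mid Z)$ to be jointly integrable, and the per-$Z$ integration by parts needs its boundary terms to vanish uniformly enough to average over $Z$. I would absorb both requirements into \Cref{ass:smooth}, reading it as guaranteeing finite first moments of these products. This is the same standing assumption already used in \Cref{thm:cm_identity}, so no new hypothesis is needed.
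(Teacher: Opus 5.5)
Your proof is correct, but it takes a different route from the paper's.

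\textbf{The paper's route.} The paper works pointwise in $\vx$. It differentiates the posterior-average representation $\bar{\vv}_t(\vx)=\int p_t(z\mid\vx)\vv_t(\vx\mid z)\,\dd z$ with the product rule, obtaining $\diver\bar{\vv}_t(\vx)=\int p_t(z\mid\vx)\diver\vv_t(\vx\mid z)\,\dd z+\int\nabla_{\vx}p_t(z\mid\vx)^\top\vv_t(\vx\mid z)\,\dd z$. It then integrates against $p_t(\vx)$ and uses Bayes' rule, $\nabla_{\vx}\log p_t(z\mid\vx)=\nabla_{\vx}\log p_t(\vx\mid z)-\nabla_{\vx}\log p_t(\vx)$, to recognize the correction as the score gap. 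Finally it rearranges with \Cref{thm:cm_identity}.

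\textbf{Your route.} You convert each divergence expectation separately into a score pairing via the Stein (integration-by-parts) identity. You then use only the defining conditional-expectation property of $\bar{\vv}_t$ (the tower property) to move the marginal score onto the conditional field. This needs no differentiation of the posterior weights $p_t(z\mid\vx)$ in $\vx$ and no interchange of $\nabla_{\vx}$ with the $z$-integral. Its only analytic input is the vanishing of boundary terms, which \Cref{thm:cm_identity} already uses. In fact, your two pairings are exactly the intermediate expression $-\int p_t\,\bar{\vv}_t^\top\nabla_{\vx}\log p_t\,\dd\vx$ and its conditional analogue from \Cref{app:full_derivations}, taken before the last integration by parts. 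Your argument therefore amounts to stopping that calculation one step early and closing with the tower property.

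\textbf{What the paper's route buys.} It gives a pointwise decomposition of $\diver\bar{\vv}_t(\vx)$ into a posterior-averaged conditional divergence plus a correction, valid before any integration over $\vx$. It also identifies the score gap explicitly as $\nabla_{\vx}\log p_t(Z\mid\vx)$, the sensitivity of the endpoint posterior to position. That is the interpretation the paper draws from the proposition.
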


\Cref{prop:score_form} is not the estimator used in the high-dimensional experiments, since conditional scores are usually not exposed by pretrained models. It is included because it explains what the divergence contrast measures: the conditional field is weighted by the gap between the conditional score and the marginal score.

\subsection{Gaussian bridges}
\label{sec:gaussian}

For a Gaussian conditional path,
$
  p_t(\vx\mid z)=\mathcal{N}\big(\vx;\mu(z,t),\sigma^2(z,t)I\big),
$
a sample can be written as $X_t=\mu(z,t)+\sigma(z,t)\veps$, with $\veps\sim \mathcal{N}(0,I)$. Differentiating the sample path gives
$
  \vv_t(\vx\mid z)
  =
  \partial_t \mu(z,t)
  +
  \partial_t \sigma(z,t)\frac{\vx-\mu(z,t)}{\sigma(z,t)}.
$
Using the Gaussian score
$
  \nabla_{\vx}\log p_t(\vx\mid z)
  =
  -\frac{\vx-\mu(z,t)}{\sigma^2(z,t)},
$
we obtain the score form
\begin{equation}
  \vv_t(\vx\mid z)
  =
  \partial_t\mu(z,t)
  -
  \sigma(z,t)\partial_t\sigma(z,t)\nabla_{\vx}\log p_t(\vx\mid z).
  \label{eq:gaussian_velocity_score}
\end{equation}

For a Brownian bridge between $x_0$ and $x_1$, write
  $m_t=(1-t)x_0+tx_1$ 
  and
  $\sigma(t)=\sigma_0\sqrt{t(1-t)}$.
Substitution into \Cref{eq:gaussian_velocity_score} gives the deterministic probability-flow field
\begin{equation}
  \vv_t(\vx\mid x_0,x_1)
  =
  (x_1-x_0)
  +
  \frac{1-2t}{2t(1-t)}(\vx-m_t).
  \label{eq:bb_ode_field}
\end{equation}
The noise scale $\sigma_0$ cancels. This is important because the schedule shape is a property of bridge geometry, not of an arbitrary Brownian scale.

\begin{proposition}[Closed-form conditional divergence]
    \label{prop:bb_divergence}
    For the Brownian-bridge probability-flow field in \Cref{eq:bb_ode_field}, \(\displaystyle \diver \vv_t(\vx\mid x_0,x_1)=d\,\frac{1-2t}{2t(1-t)}\). \inlineeqtag{eq:bb_divergence}
    \end{proposition}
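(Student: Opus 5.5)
The plan is a direct computation from \Cref{eq:bb_ode_field}. I will first re-derive that field from \Cref{eq:gaussian_velocity_score}, so the coefficient of $(\vx-m_t)$ is pinned down exactly, and then take the divergence term by term.

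\textbf{Step 1: re-derive the field.} I substitute $\mu(z,t)=m_t=(1-t)x_0+tx_1$ and $\sigma(t)=\sigma_0\sqrt{t(1-t)}$ into \Cref{eq:gaussian_velocity_score}. This gives $\partial_t\mu=x_1-x_0$, which does not depend on $\vx$. Since $\sigma^2=\sigma_0^2t(1-t)$, we get $\sigma\partial_t\sigma=\tfrac12\partial_t\sigma^2=\tfrac12\sigma_0^2(1-2t)$. Combining this with the Gaussian score $-(\vx-m_t)/\sigma^2$ gives the coefficient $\frac{\sigma\partial_t\sigma}{\sigma^2}=\frac{\partial_t\sigma}{\sigma}=\frac{1-2t}{2t(1-t)}$. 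This recovers \Cref{eq:bb_ode_field}, and $\sigma_0$ cancels.

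\textbf{Step 2: take the divergence.} The field has the affine form $\vv_t(\vx)=b_t+c_t(\vx-m_t)$. Here $b_t=x_1-x_0$, $c_t=\frac{1-2t}{2t(1-t)}$, and both are independent of $\vx$. The divergence of the constant term $b_t$ vanishes. The Jacobian of $c_t(\vx-m_t)$ with respect to $\vx$ is $c_t I_d$, whose trace is $d\,c_t$. Hence
\[
\diver\vv_t(\vx\mid x_0,x_1)=d\,\frac{1-2t}{2t(1-t)},
\]
which is \Cref{eq:bb_divergence}.

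\textbf{Main point of care.} The computation is routine; what needs attention is the domain. The formula holds for $t\in(0,1)$, where $\sigma(t)>0$ and the conditional law is a nondegenerate Gaussian. At $t=0$ and $t=1$ the bridge collapses to a point mass, and the divergence is singular there. This singularity is the source of the U-shaped rate.

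As a consistency check, I will verify that the result agrees with the conditional part of \Cref{thm:cm_identity}. For a Gaussian, $\Ent(X_t\mid Z)=\tfrac d2\log(2\pi e\,\sigma^2)$. Differentiating gives $d\,\partial_t\sigma/\sigma$, which equals the expectation of the constant divergence computed above.
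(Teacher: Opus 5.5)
Your proposal is correct and matches the paper's proof: both note that $x_1-x_0$ and $m_t$ do not depend on $\vx$, so only $\frac{1-2t}{2t(1-t)}(\vx-m_t)$ contributes, and its divergence is $d\,\frac{1-2t}{2t(1-t)}$. Your re-derivation of the field (which the paper gives in its Brownian-bridge appendix) and your Gaussian-entropy consistency check are both correct, but neither is needed for the proposition itself.
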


\begin{proof}
The term $x_1-x_0$ is independent of $\vx$ and has zero divergence. The term $m_t$ is also independent of $\vx$. Therefore
\begin{equation}
  \diver \left[\frac{1-2t}{2t(1-t)}(\vx-m_t)\right]
  =
  \frac{1-2t}{2t(1-t)}\diver \vx
  =
  d\,\frac{1-2t}{2t(1-t)}.
\end{equation}
\end{proof}

The magnitude of \Cref{eq:bb_divergence} is singular near both endpoints and vanishes at the midpoint. This is the mathematical source of the U-shaped bridge profile. Because the conditional divergence is spatially constant and endpoint-independent in the Brownian case, averaging over an entropic optimal-transport coupling preserves the same conditional expectation. The full conditional--marginal rate further subtracts the marginal divergence term in \Cref{eq:cm_identity}; the conditional closed form supplies a model-independent anchor and the marginal term adapts it to the learned population path. The U-shape is a theorem for the Gaussian Brownian bridge, not for every learned bridge. Outside this class, the identity remains the general object and the profile must be estimated or derived from the model. Other interpolants can yield flat or monotone analytic profiles, as summarized in \Cref{app:interpolation_profiles}.

\subsection{SDE drift and probability flow}
\label{sec:sde_ode}

For a Brownian bridge, the conditional SDE drift can be written
\begin{equation}
  u^o_t(\vx\mid x_0,x_1)
  =
  (x_1-x_0)
  +
  \frac{1-2t}{t(1-t)}(\vx-m_t),
  \label{eq:bb_sde_drift}
\end{equation}
with conditional score
\begin{equation}
  \nabla_{\vx}\log p_t(\vx\mid x_0,x_1)
  =
  \frac{m_t-\vx}{\sigma_0^2t(1-t)}.
  \label{eq:bb_cond_score}
\end{equation}
The ODE field in \Cref{eq:bb_ode_field} and the SDE drift in \Cref{eq:bb_sde_drift} are distinct conditional objects. They satisfy
\begin{equation}
  u^o_t(\vx\mid x_0,x_1)
  =
  2\vv_t(\vx\mid x_0,x_1)-(x_1-x_0).
  \label{eq:sde_ode_cond_relation}
\end{equation}
The usual probability-flow relation applies to marginal fields after integrating over bridge endpoints. It should not be applied naively to the conditional drift. \Cref{app:sde_ode_details} gives the step-by-step algebra, including the cancellation that explains the factor of two. This distinction is operationally important: using the conditional SDE drift as if it were already the probability-flow field removes the bridge contraction term and makes the conditional profile appear artificially flat.

\section{Estimator, grids, and experimental loops}
\label{sec:estimator}

We organize the method into three layers. The inner layer is mathematical: choose the rate that a bridge actually exposes, namely the conditional divergence minus the marginal divergence. The middle layer is computational: estimate that rate without forming a Jacobian. The outer layer is empirical: use the estimated rate to build a grid, then ask whether the resulting allocation explains the observed behavior of high-dimensional samplers. The layers are tied together because the outer evidence is meaningful only if the computational estimator matches the mathematical rate.

The cond--marg rate is estimated by evaluating both divergence terms in \Cref{eq:cm_identity}. Exact traces are infeasible in high dimension, so the middle loop uses Hutchinson estimation:
\begin{equation}
  \diver \vv_t(\vx)
  =
  \Tr \nabla_{\vx}\vv_t(\vx)
  =
  \E_{\vu}\!\left[\vu^\top\nabla_{\vx}\vv_t(\vx)\vu\right],
  \qquad
  \E[\vu\vu^\top]=I.
  \label{eq:hutch}
\end{equation}
Using the same random probe for the conditional and marginal terms reduces the variance of their difference. After averaging over calibration samples and time points, the rate is smoothed, clipped to a small positive floor, transformed by the default regularized map $\phi(r)=\log(1+r)$, and converted into a grid through \Cref{eq:inverse_cdf}. We also report the raw transform $\phi(r)=r$  as an ablation. The raw grid preserves the uncompressed cond--marg signal, whereas the log transform keeps the ordering of high-rate regions while reducing endpoint domination.


\begin{computebox}
\textbf{Computational note.} With $M$ calibration times, $n$ calibration states, and $m$ Hutchinson probes, the estimator uses $O(Mnm)$ derivative-vector products. A typical calibration with 50 times and four probes uses 200 derivative-vector products, which is then amortized across all samples drawn with the same grid. Once the grid is built, sampling at a fixed NFE has the same number of model evaluations as linear, cosine, sigmoid, or power grids. The high-dimensional evidence uses the same inference budgets as the baselines and treats entropy estimation as a calibration step. Memory and arithmetic-intensity details are in \Cref{app:compute}.
\end{computebox}

The output of this construction is a density over time and the grid obtained from its inverse CDF. \Cref{fig:alphaflow_schedule} summarizes the AlphaFlow cond--marg allocation, while \Cref{fig:alphaflow_schedule_geometry} compares the regularized node geometry against linear and against the ten-step DTW alignment.
    \begin{figure}[t]
      \centering
      \includegraphics[height=0.15\textheight, width=0.95\linewidth]{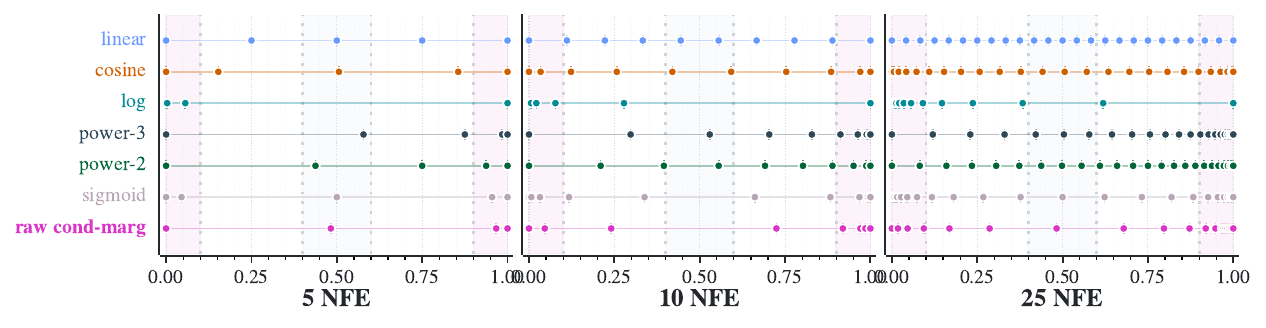}
      \caption{AlphaFlow cond--marg schedule allocation. The inverse-CDF grid places nodes according to the estimated conditional--marginal rate for 5, 10, and 25 steps; shaded endpoint bands mark the boundary regions used for the schedule-geometry diagnostic. This is grid-allocation evidence, not endpoint-quality evidence.}
      \label{fig:alphaflow_schedule}
    \end{figure}

Algorithmic summaries are given in \Cref{app:algorithms}. We also define a boundary concentration ratio (BCR) to summarize how much grid mass is placed near the endpoints. Visual inspection of schedule curves can be misleading when two grids look similar but allocate different endpoint density, so BCR gives a quantitative check of boundary concentration. This auxiliary measurement can be useful beyond this work for comparing schedule concentration across models; details and derivation are in \Cref{app:bcr}. We use BCR only as a grid-geometry diagnostic, not as an endpoint-quality metric.

\begin{proposition}[Weighted step-size rule]
    \label{prop:ode_error}
    For an order-$p$ one-step ODE solver with local error constant $C(t)$, the continuous step density minimizing $\int_0^1 C(t)\rho(t)^{-p}\dd t$ subject to $\int_0^1\rho(t)\dd t=1$ is \(\displaystyle \rho^\star(t)\propto C(t)^{1/(p+1)}\). \inlineeqtag{eq:rho_star}
    \end{proposition}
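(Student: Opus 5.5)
The plan is a constrained minimization over densities, solved by a Lagrange multiplier argument and then confirmed globally with H\"older's inequality. We take $C(t)>0$ (or at least $C\ge 0$, not identically zero) and measurable with $\int_0^1 C^{1/(p+1)}\,\dd t<\infty$, and minimize over positive densities $\rho$ on $[0,1]$. The functional $J(\rho)=\int_0^1 C(t)\rho(t)^{-p}\,\dd t$ is the continuum limit of the accumulated local error $\sum_k C(t_k)h_k^{p+1}$. Indeed, with local step size $h(t)\approx 1/(N\rho(t))$ and about $N\rho(t)\,\dd t$ steps in $[t,t+\dd t]$, this sum becomes $N^{-p}\int C\rho^{-p}\,\dd t$. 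This heuristic is only motivation; the statement itself concerns only the continuous problem.

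\textbf{Step 1 (stationarity).} Form $\mathcal{L}(\rho,\lambda)=\int_0^1 C\rho^{-p}\,\dd t+\lambda\big(\int_0^1\rho\,\dd t-1\big)$ and take the pointwise first variation. This gives $-pC(t)\rho(t)^{-p-1}+\lambda=0$, so
\[
\rho(t)=(pC(t)/\lambda)^{1/(p+1)}\propto C(t)^{1/(p+1)},
\]
and the normalization $\int\rho=1$ fixes $\lambda$. The integrand $\rho\mapsto C\rho^{-p}$ is strictly convex on $(0,\infty)$ wherever $C>0$, so this stationary point is the unique minimizer. Pointwise strict convexity is enough here because the constraint is linear.

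\textbf{Step 2 (global optimality via H\"older).} To avoid calculus-of-variations regularity issues, I would verify optimality directly. Write
\[
C^{1/(p+1)}=\big(C\rho^{-p}\big)^{1/(p+1)}\rho^{p/(p+1)}
\]
and apply H\"older with exponents $p+1$ and $(p+1)/p$:
\[
\int_0^1 C^{1/(p+1)}\,\dd t\le\Big(\int_0^1 C\rho^{-p}\,\dd t\Big)^{1/(p+1)}\Big(\int_0^1\rho\,\dd t\Big)^{p/(p+1)}.
\]
With $\int\rho=1$ this yields $J(\rho)\ge\big(\int_0^1 C^{1/(p+1)}\big)^{p+1}$ for every admissible $\rho$. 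Equality in H\"older holds exactly when $C\rho^{-p}$ is proportional to $\rho$ almost everywhere, that is, when $\rho\propto C^{1/(p+1)}$. Plugging $\rho^\star=C^{1/(p+1)}/\int C^{1/(p+1)}$ into $J$ attains the bound, which proves the claim together with uniqueness almost everywhere.

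\textbf{Main obstacle.} The delicate part is not the optimization but the admissible class. If $C$ vanishes on a set of positive measure, the minimizer puts zero density there and $\rho^{-p}$ is undefined. I would handle this with the convention $0\cdot\infty=0$, or by restricting to $\{C>0\}$; the H\"older argument goes through unchanged on that set. If $C$ blows up at the endpoints, as the Brownian-bridge divergence in \Cref{prop:bb_divergence} suggests, we need the integrability assumption $\int C^{1/(p+1)}<\infty$ so that $\rho^\star$ is normalizable. Under that assumption the inverse-CDF construction in \Cref{eq:inverse_cdf}, with $r=C^{1/(p+1)}$, realizes $\rho^\star$ as a grid.
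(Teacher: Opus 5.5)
Your proposal is correct, and Step 1 is the paper's proof. The ODE-error appendix derives the same functional from $h(t)\approx 1/(N\rho(t))$, forms the same Lagrangian, and reads off $\rho\propto C^{1/(p+1)}$ from the stationarity condition $-pC\rho^{-p-1}+\lambda=0$, and it stops there.

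What you add is the part the paper leaves out: that this stationary point is actually the minimizer. Your H\"older bound $\int_0^1 C\rho^{-p}\,\dd t\ge\bigl(\int_0^1 C^{1/(p+1)}\,\dd t\bigr)^{p+1}$ gives global optimality, a.e.\ uniqueness through the equality case, and the optimal value, using nothing about $\rho$ beyond measurability. Your convexity remark in Step 1 would already suffice for optimality. For $\lambda>0$ the map $\rho\mapsto C\rho^{-p}+\lambda\rho$ is minimized pointwise at $\rho^\star$, and the multiplier term is the same for every feasible $\rho$.

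Your discussion of the admissible class also states the hypothesis $\int_0^1 C^{1/(p+1)}\,\dd t<\infty$, which the paper's formal derivation leaves implicit. This is a real condition here. The Brownian-bridge conditional divergence behaves like $d/(2t)$ near $t=0$, which is not integrable on $[0,1]$, whereas its $1/(p+1)$ power is.
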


The entropy rate is not asserted to equal $C(t)$ exactly. It is used as a tractable proxy for regions where the vector field compresses, expands, or bends sharply. Therefore, the default entropic grid uses the regularized rate $\log(1+r)$. The raw rate is kept as an ablation because it preserves the mathematical signal but can over-concentrate the grid near singular endpoints. This is the same practical lesson seen across diffusion scheduler and fast-solver work: a principled time signal still needs a numerically stable parametrization at low-NFE \citep{karras2022edm, lin2024common, chen2023importance, lu2022dpm, stancevic2025entropic}. The same rate can matter differently for deterministic and stochastic solvers: ODE integration exposes grid-placement error directly, while SDE noise can smooth some placement errors. The full derivation of \Cref{prop:ode_error} is in \Cref{app:ode_error}.

\section{Results}
\label{sec:evidence}

Before moving to high-dimensional samplers, we ran a controlled two-dimensional pre-flight experiment. We trained residual vector-field models on four transport geometries: continuous--continuous (C-C), continuous--discrete (C-D), discrete--continuous (D-C), and discrete--discrete (D-D), across bridge scales. We then built grids from the measured entropy-rate profiles. These learned probes recover the boundary-heavy Brownian-bridge pattern, place more nodes near endpoints, and show the largest gains at low NFE before schedules converge. We report both ODE and SDE sweeps, but use the probability-flow ODE for the high-dimensional study because it removes sampling noise and matches the cond--marg estimator. Full transport visuals and ODE/SDE curves are in \Cref{app:figures} and the controlled transport setup and entropy maps are summarized in in  \Cref{fig:controlled_transport_setup_appendix,fig:mmd_steps_appendix}.


\begin{table}[!t]
  \vspace{-0.75em}
  \centering
  \small
  \setlength{\tabcolsep}{3pt}
  \caption{Synthetic low-NFE results. Left: learned ODE-Heun MMD averaged across steps, scenarios, $\sigma$, and seeds. Right: entropic-vs-linear improvement with bootstrap intervals.}
  \label{tab:low_nfe_improvement}

  \begin{minipage}[t]{0.49\linewidth}
    \centering
    \scriptsize
    \renewcommand{\arraystretch}{1.05}
    \begin{tabular*}{\linewidth}{@{\extracolsep{\fill}}lcc@{}}
      \toprule
      Scheduler & MMD $\downarrow$ ($\times 10^{-3}$) & Entropic vs (\%) \\
      \midrule
      Linear  & 6.028$\pm$0.603 & 8.1 [6.6, 9.7] \\
      Power-2 & 6.620$\pm$1.070 & 16.4 [14.1, 18.6] \\
      Power-3 & 6.912$\pm$1.216 & 19.9 [17.6, 22.3] \\
      Log     & 8.164$\pm$1.855 & 32.2 [29.7, 34.6] \\
      Cosine  & 5.548$\pm$0.152 & 0.2 [0.1, 0.3] \\
      Sigmoid & 5.639$\pm$0.339 & 1.8 [0.5, 3.1] \\
      \begingroup\setlength{\fboxsep}{1.5pt}\colorbox{yellow!30}{\textbf{Entropic}}\endgroup
      &
      \begingroup\setlength{\fboxsep}{1.5pt}\colorbox{yellow!30}{\textbf{5.537$\pm$0.131}}\endgroup
      &
      \begingroup\setlength{\fboxsep}{1.5pt}\colorbox{yellow!30}{\textbf{---}}\endgroup \\
      \bottomrule
    \end{tabular*}
  \end{minipage}
  \hfill
  \begin{minipage}[t]{0.47\linewidth}
    \centering
    \scriptsize
    \renewcommand{\arraystretch}{1.05}
    \begin{tabular*}{\linewidth}{@{\extracolsep{\fill}}ccc@{}}
      \toprule
      Steps & ODE-Heun (\%) & SDE-Heun (\%) \\
      \midrule
      10 & 18.1 [-4.9, 35.9]  & 22.7 [7.3, 34.9] \\
      25 & 12.3 [-10.8, 31.1] & 13.1 [-3.7, 27.0] \\
      50 & 7.0 [-16.8, 26.7]  & 10.7 [-5.2, 24.8] \\
      \bottomrule
    \end{tabular*}
  \end{minipage}
  \vspace{-0.75em}
\end{table}
We next test the same allocation signal in two high-dimensional settings. EDM/CIFAR-10 is a mismatch case: it is not a bridge model and was not trained with the entropy objective, but it tests whether a tempered cond--marg-inspired grid can help a pretrained image sampler \citep{karras2022edm}. AlphaFlow operates on protein ensembles and is closer to the bridge setting because its flow-matching sampler exposes the fields needed for the cond--marg estimator, though it was also not trained with this criterion \citep{jing2024alphafoldmeetsflowmatching}. We use ODE sampling throughout, evaluate CAMEO22 and ATLAS proteins\citep{ha2015cameo, stam2024atlas}, and report the evaluation funnel in \Cref{app:plddt_metric}.



\textbf{AlphaFlow.} The cond--marg profile in \Cref{fig:entropy_curves} is strongly U-shaped: the intervals $[0,0.1]$ and $[0.9,1]$ each carry about $31.1\%$ of the normalized mass, while the center window $[0.4,0.6]$ carries about $5.0\%$.    The resulting grid allocation is shown in \Cref{fig:alphaflow_schedule}; the log1p node geometry and ten-step DTW alignment are shown in \Cref{fig:alphaflow_schedule_geometry}. Protein structures lie on a thin constrained manifold, where bond geometry, steric exclusions, chirality, and fold commitment can make both endpoints stiff. This motivates boundary allocation, but it does not by itself prove endpoint-quality improvement. Endpoint predicted local distance difference test (pLDDT) is mixed in the available AlphaFlow sweeps summarized in \Cref{tab:alphaflow-plddt-ode}: sigmoid and cosine are strongest in many displayed cells, raw entropic scheduling is strongest in the medium and large five-step cells, and raw entropic pLDDT degrades sharply at 10 and 25 steps. pLDDT is an AlphaFold-style per-residue local distance difference test, and we use it here as an endpoint confidence proxy rather than as a measure of schedule geometry \citep{jumper2021,mariani2013lddt}. Our method leads pLDDT performance on both medium and large-sized proteins ($97.62\pm0.22$ and $97.91\pm0.14$) at the low 5-step NFE count, and it is close to the sigmoid leader for small proteins ($95.06\pm1.12$ versus $95.26\pm0.99$). Its limitations in this paper are summarized in \Cref{app:plddt_metric}.  Additional AlphaFlow grid and structure-evolution visualizations are kept in the Evaluation Grids appendix, including the 25-step AlphaFlow trajectory diagnostic in \Cref{app:evaluation_grids}; \Cref{fig:alphaflow_evolution_25steps_appendix} explains how endpoint plDDT should be interpreted so that we keep our emphasis on the schedule geometry.

\begin{table}[H]
\centering
\scriptsize
\setlength{\tabcolsep}{2.2pt}
\caption{AlphaFlow endpoint pLDDT under ODE-Heun. Values are means with 95\% CIs; higher is better. Highlighting marks the best scheduler per column and bold marks entropic.}
\label{tab:alphaflow-plddt-ode}
\begin{tabular*}{\linewidth}{@{\extracolsep{\fill}}lccccccccc@{}}
    \toprule
    & \multicolumn{3}{c}{\begin{tabular}[c]{@{}c@{}}Small \,\,{\scriptsize $\le 50$ aa}\end{tabular}}
    & \multicolumn{3}{c}{\begin{tabular}[c]{@{}c@{}}Medium \,\, {\scriptsize 51--400 aa}\end{tabular}}
    & \multicolumn{3}{c}{\begin{tabular}[c]{@{}c@{}}Large \,\,{\scriptsize $>400$ aa}\end{tabular}} \\
    \cmidrule(lr){2-4}\cmidrule(lr){5-7}\cmidrule(l){8-10}
    Scheduler & 5 & 10 & 25 & 5 & 10 & 25 & 5 & 10 & 25 \\
    \midrule
    Linear  & \alphaplddtcell{89.97}{1.83} & \alphaplddtcell{94.15}{0.88} & \alphaplddtcell{95.82}{0.51} & \alphaplddtcell{93.10}{0.61} & \alphaplddtcell{95.08}{0.31} & \alphaplddtcell{96.25}{0.12} & \alphaplddtcell{94.27}{0.57} & \alphaplddtcell{94.30}{0.30} & \alphaplddtcell{94.12}{0.12} \\
    Cosine  & \alphaplddtcell{92.73}{1.44} & \alphaplddtcell{96.08}{0.66} & \alphabestplddtcell{96.07}{0.54} & \alphaplddtcell{95.23}{0.47} & \alphaplddtcell{97.57}{0.13} & \alphaplddtcell{97.54}{0.06} & \alphaplddtcell{95.92}{0.45} & \alphaplddtcell{97.09}{0.13} & \alphaplddtcell{96.39}{0.07} \\
    Sigmoid & \alphabestplddtcell{95.26}{0.99} & \alphabestplddtcell{96.15}{0.80} & \alphaplddtcell{95.64}{0.71} & \alphaplddtcell{97.49}{0.26} & \alphabestplddtcell{98.11}{0.08} & \alphabestplddtcell{97.55}{0.06} & \alphaplddtcell{97.82}{0.23} & \alphabestplddtcell{97.95}{0.08} & \alphabestplddtcell{96.67}{0.06} \\
    Power-2 & \alphaplddtcell{84.81}{2.64} & \alphaplddtcell{90.44}{1.82} & \alphaplddtcell{94.87}{0.78} & \alphaplddtcell{90.92}{0.80} & \alphaplddtcell{93.99}{0.52} & \alphaplddtcell{95.27}{0.22} & \alphaplddtcell{92.54}{0.73} & \alphaplddtcell{94.89}{0.45} & \alphaplddtcell{93.52}{0.20} \\
    \textbf{Entropic} & \alphaplddtcellbold{95.06}{1.12} & \alphaplddtcellbold{78.79}{3.42} & \alphaplddtcellbold{75.45}{3.81} & \alphabestplddtcellbold{97.62}{0.22} & \alphaplddtcellbold{87.85}{1.11} & \alphaplddtcellbold{86.02}{1.25} & \alphabestplddtcellbold{97.91}{0.14} & \alphaplddtcellbold{90.26}{0.68} & \alphaplddtcellbold{89.26}{0.76} \\
    \bottomrule
\end{tabular*}
\end{table}

\textbf{EDM.} EDM is a mismatch test because it lies outside the Schr\"odinger-bridge setting \citep{karras2022edm}. We use log-tempered cond--marg entropy as the default scheduler and raw entropy as the ablation: raw allocation over-concentrates endpoints and performs poorly, while \(\log(1+r)\) keeps the high-rate regions without exhausting the low-NFE budget at singular endpoints. \Cref{tab:edm-fid-ode} reports the full ODE-Heun FID sweep: at five steps entropic achieves $186.26\pm3.97$, compared with $200.52\pm2.91$ for linear, $238.03\pm5.29$ for cosine, and $355.06\pm2.20$ for sigmoid, while 10/25-step schedules are close.     Additional trajectory-grid diagnostics are in \Cref{app:evaluation_grids}. The schedule-geometry comparison in \Cref{fig:alphaflow_schedule_geometry} shows the same calibration point: log1p stays close to linear but remains endpoint-biased, whereas raw cond--marg allocation requires stronger warping.

\begin{table}[!t]
  \caption{EEDM/CIFAR-10 ODE-Heun FID. Values are means with 95\% CIs over five seeds; lower is better. Entropic uses cond--marg log1p, with raw entropy shown as an ablation.}
  \label{tab:edm-fid-ode}
  \centering
  \scriptsize
  \setlength{\tabcolsep}{1.8pt}
  \begin{adjustbox}{max width=\linewidth}
  \begin{tabular}{@{}lccc@{}}
    \toprule
    Scheduler & 5 steps & 10 steps & 25 steps \\
    \midrule
    ODE + Linear  & \edmfidcell{200.52}{2.91} & \edmfidcell{172.78}{2.83} & \edmfidcell{170.85}{3.50} \\
    ODE + Cosine  & \edmfidcell{238.03}{5.29} & \edmfidcell{176.58}{2.49} & \edmfidcell{170.80}{3.36} \\
    ODE + Sigmoid & \edmfidcell{355.06}{2.20} & \edmfidcell{212.15}{8.03} & \edmfidcell{172.68}{2.39} \\
    ODE + Power-2 & \edmfidcell{316.54}{2.68} & \edmfidcell{189.13}{3.51} & \edmfidcell{171.28}{4.06} \\
    ODE + Entropic (raw) & \edmfidcell{322.58}{4.80} & \edmfidcell{290.11}{4.94} & \edmfidcell{227.96}{1.43} \\
    \textbf{ODE + Entropic} & \edmbestfidcell{186.26}{3.97} & \edmbestfidcell{1å72.10}{3.74} & \edmbestfidcell{170.78}{3.42} \\
    \bottomrule
  \end{tabular}
  \end{adjustbox}
\end{table}

  \begin{figure*}[t]
    \centering
    \includegraphics[width=\textwidth]{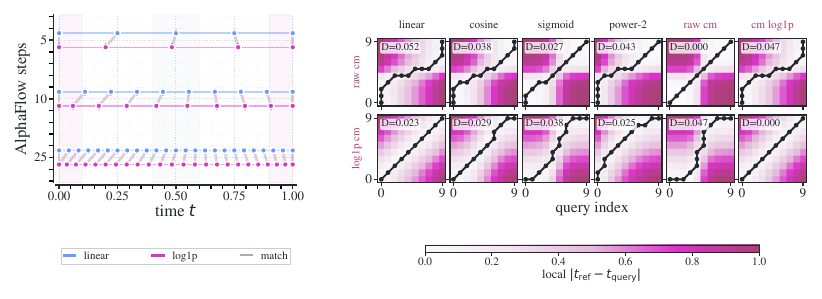}
    \caption{AlphaFlow schedule-geometry diagnostics. Left: the conditional--marginal log1p grid remains \textit{visually} close to the linear grid across 5, 10, and 25 steps while keeping
  endpoint bias quantitatively. Right: dynamic-time-warping paths at 10 steps quantify this distinction; the log1p grid has lower distance to linear than the raw cond--marg grid, while raw cond--marg requires stronger index warping.}
    \label{fig:alphaflow_schedule_geometry}
  \end{figure*}

    \textbf{Cross-domain allocation.} \Cref{fig:entropy_curves} puts the EDM and AlphaFlow entropy profiles on the same page, \Cref{fig:alphaflow_schedule} shows the AlphaFlow allocation, and \Cref{fig:alphaflow_schedule_geometry} shows the log1p-vs-linear node geometry together with the ten-step DTW alignment. In these diagnostics the bridge-aware quantity is the two-term rate
\begin{equation}
  r_{\cm}(t)
  =
  \left|
  \E_{Z,X_t\mid Z}\!\left[\diver\vv_t(X_t\mid Z)\right]
  -
  \E_{X_t}\!\left[\diver\bar{\vv}_t(X_t)\right]
  \right|.
  \label{eq:results_cm_rate}
\end{equation}
The profiles are not the same, which is the point of measuring the entropy rate. EDM needs tempering because its raw profile is too sharp for stable low-NFE integration. AlphaFlow shows a bridge-like U-shape, but endpoint confidence remains influenced by the training objective and protein-specific geometry. The evidence separates three ideas that are easy to conflate: the entropy profile, the numerical grid derived from that profile, and the downstream metric used to judge the generated sample. Additional grid and evolution visualizations for EDM and AlphaFlow are in \Cref{app:evaluation_grids}.

\section{Discussion}
\label{sec:discussion}

This work follows a simple sequence of tests. The first obstacle is remaining competitive with, and sometimes improving over, the strong fixed grid baselines. Linear, cosine, sigmoid, and power schedules often work because they encode plausible assumptions about where a trajectory is hard to integrate. The entropy construction must do more than replace one smooth curve with another, because it must measure a path-dependent signal that fixed grids can only approximate. Cosine, sigmoid, and power grids can be good because they approximate common shapes of entropy-rate profiles. The two-dimensional sweep makes this point concrete,although it does not prove that any fixed heuristic is optimal. In those controlled experiments, cosine was often close to the entropic scheduler because its node density can approximate a boundary-heavy curve. However, this was not a separate theory of cosine schedules, and the near-tie was scenario-dependent. The result should be read as shape compatibility between cosine and the measured entropic grid, not as equivalence between the two schedules \citep{nichol2021improved,karras2022edm,lin2024common}. When the true profile is monotone, U-shaped, or boundary-heavy, these heuristics can be close. Linear grids can work when the profile is flat. The advantage of the entropy construction is that it measures the shape instead of assuming it. The same 2D runs also explain why the training-loss curves matter. In the flow-matching ideal, reducing the loss should make the learned vector field closer to the target bridge field on the training distribution. When train and validation curves decrease together, the estimated entropy-rate profile is expected to move toward the analytic U-shape. We use this only as a diagnostic for alignment between the learned field and the analytic bridge calculation, not as a convergence proof for all learned bridges. This distinction matters most at low NFE, where a few misplaced steps can determine the outcome.

\section{Related work}
\label{sec:related}

\textbf{Diffusion, flow, and bridge models.} Diffusion models and score-based samplers established continuous-time generation as a numerical integration problem \citep{ho2020ddpm,kingma2021vdm,karras2022edm}. Flow matching and stochastic interpolants later simplified training by regressing vector fields along prescribed conditional paths \citep{lipman2023flow,albergo2023building,tong2024cfm,liu2023rectified}. Schr\"odinger bridges connect this view to entropy-regularized optimal transport and stochastic control \citep{schrodinger1931,schrodinger1932,leonard2014schrodinger,debortoli2021dsb,liu2023i2sb,tong2024simulation}. Recent work expands bridges to energy-based sampling, discrete spaces, structured domains, and branched trajectories \citep{liu2025asbs,tamogashev2026dataenergy,ksenofontov2025categorical,tang2026branched,wyrwal2026topological}. Our contribution is orthogonal to these training objectives: it asks how to place inference steps once a path has been learned.

\textbf{Schedulers and fast samplers.} Classical schedules and design-space analyses remain important baselines \citep{nichol2021improved,kingma2021vdm,karras2022edm,lin2024common}. DPM-Solver, DPM-Solver++, UniPC, DEIS, and STORK change the numerical update or exploit ODE structure \citep{lu2022dpm,lu2022dpmsolverpp,zhao2023unipc,zhang2022deis,tan2026stork}. Align Your Steps, optimized-step methods, dynamic-transport scheduling, and B{\'e}zierFlow search for better timesteps or scheduler parameterizations \citep{sabour2024alignsteps,xue2024optimized,tsimpos2025lipschitz,min2026bezierflow}. These approaches can be combined with an entropy-derived grid, but they do not identify the bridge-specific conditional--marginal rate.



\textbf{Entropy and information geometry.} Entropy-based scheduling for diffusion models was introduced by \citet{stancevic2025entropic} and further connected to information dynamics by \citet{stancevic2026information}. Entropy and mutual information also appear widely as learning objectives or regularizers: entropy-regularized optimal transport and maximum-entropy control use entropy to shape couplings or policies \citep{cuturi2013sinkhorn,ziebart2008maximum,haarnoja2018soft}; information bottlenecks, InfoMax, InfoGAN, neural mutual-information estimators, and Mutual Information Machines use mutual information as a compression, representation, or generative-model objective \citep{tishby2000information,bell1995infomax,chen2016infogan,belghazi2018mine,livne2019mim}; and protein inverse folding has used diversity-regularized DPO to increase differential entropy in ProteinMPNN log-probability space \citep{park2024peptidedpo}. This work extends the information view to bridge and flow settings through \Cref{eq:cm_identity}. The main difference is the object being estimated: diffusion entropic time uses a one-endpoint entropy signal, while a bridge exposes a contrast between conditional and marginal divergence; \Cref{tab:stancevic_appendix} summarizes the comparison.

\textbf{Low-step training and scientific domains.} Distillation compresses many steps into a trained few-step model \citep{gushchin2025ibmd,he2024cdbm,ai2026jointdistillation,passaro2026stochasticfewstep}. Entropic discretization is training-free and can be used with pretrained samplers. In scientific domains, flow and bridge methods increasingly model constrained objects such as proteins, molecules, materials, and distributions over distributions \citep{jing2024alphafoldmeetsflowmatching,proteina2025,bioemu2024,irwin2024molflow,miller2024flowmm,haviv2025wasserstein}. These domains motivate measuring the path geometry directly because heuristic grids may miss endpoint stiffness or manifold constraints. A compact comparison is given in \Cref{app:comparison}.
\section{Limitations and future work}
\label{sec:limitations}


Overall, the results support conditional--marginal entropy rate as a practical low-NFE allocation signal, provided it is regularized into a stable grid and interpreted separately from endpoint-quality metrics. The estimator depends on access to conditional and marginal fields. Some pretrained models expose only a single field or require a surrogate for one of the terms. In such cases the resulting grid should be treated as a diagnostic unless both terms are matched to the theory. Hutchinson estimation introduces variance, and raw boundary singularities can be too sharp for numerical stability. Log tempering is a practical calibration, not a theorem. The closed-form U-shape is also a Gaussian Brownian-bridge result. For non-Gaussian learned bridges, \Cref{eq:cm_identity} remains the target, but the shape of the rate is an empirical or model-specific question. While the current results are encouraging, optimized entropic scheduling for protein generation deserves a separate research program. Short proteins, long proteins, flexible loops, and disordered regions may have different entropy profiles. Models such as AlphaFlow, BioEmu, Proteina, and Peptron make it possible to study those profiles as biological signals rather than as nuisance quantities \citep{jing2024alphafoldmeetsflowmatching,bioemu2024,proteina2025, peptron2025}. Besides protein generation, the same question could be tested in image generation by training or calibrating EDM- or latent-diffusion models with the entropy criterion, and then sampling them with an entropic grid at inference time \citep{karras2022edm,rombach2022ldm,ho2020ddpm}. In both domains, the next question is whether training and inference can be aligned around the same cond--marg information rate.



%

 \bibliographystyle{plainnat}
\bibliography{references}

@article{mariani2013lddt,
  title   = {lDDT: a local superposition-free score for comparing protein structures and models using distance difference tests},
  author  = {Mariani, Valerio and Biasini, Marco and Barbato, Alessandro and Schwede, Torsten},
  journal = {Bioinformatics},
  volume  = {29},
  number  = {21},
  pages   = {2722--2728},
  year    = {2013},
  doi     = {10.1093/bioinformatics/btt473}
}

@inproceedings{ho2020ddpm,
  title     = {Denoising Diffusion Probabilistic Models},
  author    = {Ho, Jonathan and Jain, Ajay and Abbeel, Pieter},
  booktitle = {Advances in Neural Information Processing Systems},
  volume    = {33},
  pages     = {6840--6851},
  year      = {2020}
}

@inproceedings{nichol2021improved,
  title     = {Improved Denoising Diffusion Probabilistic Models},
  author    = {Nichol, Alexander Quinn and Dhariwal, Prafulla},
  booktitle = {International Conference on Machine Learning},
  pages     = {8162--8171},
  year      = {2021}
}

@inproceedings{rombach2022ldm,
  title     = {High-Resolution Image Synthesis With Latent Diffusion Models},
  author    = {Rombach, Robin and Blattmann, Andreas and Lorenz, Dominik and Esser, Patrick and Ommer, Bj{\"o}rn},
  booktitle = {Proceedings of the IEEE/CVF Conference on Computer Vision and Pattern Recognition},
  pages     = {10684--10695},
  year      = {2022}
}

@inproceedings{kingma2021vdm,
  title     = {Variational Diffusion Models},
  author    = {Kingma, Diederik P. and Salimans, Tim and Poole, Ben and Ho, Jonathan},
  booktitle = {Advances in Neural Information Processing Systems},
  volume    = {34},
  year      = {2021}
}

@inproceedings{karras2022edm,
  title     = {Elucidating the Design Space of Diffusion-Based Generative Models},
  author    = {Karras, Tero and Aittala, Miika and Aila, Timo and Laine, Samuli},
  booktitle = {Advances in Neural Information Processing Systems},
  volume    = {35},
  pages     = {26565--26577},
  year      = {2022}
}

@inproceedings{lin2024common,
  title     = {Common Diffusion Noise Schedules and Sample Steps are Flawed},
  author    = {Lin, Shanchuan and Liu, Bingchen and Li, Jiashi and Yang, Xiao},
  booktitle = {IEEE/CVF Winter Conference on Applications of Computer Vision},
  pages     = {5404--5411},
  year      = {2024}
}

@inproceedings{lu2022dpm,
  title     = {{DPM-Solver}: A Fast {ODE} Solver for Diffusion Probabilistic Model Sampling in Around 10 Steps},
  author    = {Lu, Cheng and Zhou, Yuhao and Bao, Fan and Chen, Jianfei and Li, Chongxuan and Zhu, Jun},
  booktitle = {Advances in Neural Information Processing Systems},
  volume    = {35},
  pages     = {5775--5787},
  year      = {2022}
}

@inproceedings{zhao2023unipc,
  title     = {{UniPC}: A Unified Predictor-Corrector Framework for Fast Sampling of Diffusion Models},
  author    = {Zhao, Wenliang and Bai, Lujia and Rao, Yongming and Zhou, Jie and Lu, Jiwen},
  booktitle = {Advances in Neural Information Processing Systems},
  volume    = {36},
  year      = {2023}
}

@article{xue2024optimized,
  title   = {Accelerating Diffusion Sampling with Optimized Time Steps},
  author  = {Xue, Shuchen and Liu, Zhaoqiang and Chen, Fei and Zhang, Shifeng and Hu, Tianyang and Xing, Enze and Zhou, Mingyuan},
  journal = {arXiv preprint arXiv:2402.17376},
  year    = {2024}
}

@inproceedings{lipman2023flow,
  title     = {Flow Matching for Generative Modeling},
  author    = {Lipman, Yaron and Chen, Ricky T. Q. and Ben-Hamu, Heli and Nickel, Maximilian and Le, Matthew},
  booktitle = {International Conference on Learning Representations},
  year      = {2023}
}

@inproceedings{albergo2023building,
  title     = {Building Normalizing Flows with Stochastic Interpolants},
  author    = {Albergo, Michael S. and Vanden-Eijnden, Eric},
  booktitle = {International Conference on Learning Representations},
  year      = {2023}
}

@article{tong2024cfm,
  title   = {Improving and Generalizing Flow-Based Generative Models with Minibatch Optimal Transport},
  author  = {Tong, Alexander and Fatras, Kilian and Malkin, Nikolay and Huguet, Guillaume and Zhang, Yanlei and Rector-Brooks, Jarrid and Wolf, Guy and Bengio, Yoshua},
  journal = {Transactions on Machine Learning Research},
  year    = {2024}
}

@article{schrodinger1931,
  title   = {{\"U}ber die Umkehrung der Naturgesetze},
  author  = {Schr{\"o}dinger, Erwin},
  journal = {Sitzungsberichte der Preussischen Akademie der Wissenschaften, Physikalisch-mathematische Klasse},
  year    = {1931}
}

@article{schrodinger1932,
  title   = {{\"U}ber die Umkehrung der Naturgesetze. {Z}weite Mitteilung},
  author  = {Schr{\"o}dinger, Erwin},
  journal = {Sitzungsberichte der Preussischen Akademie der Wissenschaften, Physikalisch-mathematische Klasse},
  year    = {1932}
}

@article{leonard2014schrodinger,
  title   = {A survey of the Schr{\"o}dinger problem and some of its connections with optimal transport},
  author  = {L{\'e}onard, Christian},
  journal = {Discrete and Continuous Dynamical Systems - A},
  volume  = {34},
  number  = {4},
  pages   = {1533--1574},
  year    = {2014},
  doi     = {10.3934/dcds.2014.34.1533}
}

@inproceedings{debortoli2021dsb,
  title     = {Diffusion {S}chr{\"o}dinger Bridge with Applications to Score-Based Generative Modeling},
  author    = {De Bortoli, Valentin and Thornton, James and Heng, Jeremy and Doucet, Arnaud},
  booktitle = {Advances in Neural Information Processing Systems},
  volume    = {34},
  pages     = {17695--17709},
  year      = {2021}
}

@inproceedings{liu2023i2sb,
  title     = {{I$^2$SB}: Image-to-Image {S}chr{\"o}dinger Bridge},
  author    = {Liu, Guan-Horng and Vahdat, Arash and Huang, De-An and Theodorou, Evangelos A. and Nie, Weili and Anandkumar, Anima},
  booktitle = {International Conference on Machine Learning},
  pages     = {22042--22062},
  year      = {2023}
}

@article{tong2024simulation,
  title   = {Simulation-Free {S}chr{\"o}dinger Bridges via Score and Flow Matching},
  author  = {Tong, Alexander and Malkin, Nikolay and Huguet, Guillaume and Zhang, Yanlei and Rector-Brooks, Jarrid and Fatras, Kilian and Wolf, Guy and Bengio, Yoshua},
  journal = {arXiv preprint arXiv:2307.03672},
  year    = {2024}
}

@inproceedings{stancevic2025entropic,
  title     = {Entropic Time Schedulers for Generative Diffusion Models},
  author    = {Stan\v{c}evi\'{c}, Dejan and Handke, Florian and Ambrogioni, Luca},
  booktitle = {Advances in Neural Information Processing Systems},
  year      = {2025},
  url       = {https://openreview.net/forum?id=EfDIApcjgI}
}

@article{jumper2021,
  title   = {Highly Accurate Protein Structure Prediction with AlphaFold},
  author  = {Jumper, John and Evans, Richard and Pritzel, Alexander and Green, Tim and Figurnov, Michael and Ronneberger, Olaf and Tunyasuvunakool, Kathryn and Bates, Russ and {\v{Z}}{\'\i}dek, Augustin and Potapenko, Anna and Bridgland, Alex and Meyer, Clemens and Kohl, Simon A. A. and Ballard, Andrew J. and Cowie, Andrew and Romera-Paredes, Bernardino and Nikolov, Stanislav and Jain, Rishub and Adler, Jonas and Back, Trevor and Petersen, Stig and Reiman, David and Clancy, Ellen and Zielinski, Michal and Steinegger, Martin and Pacholska, Michalina and Berghammer, Tamas and Bodenstein, Sebastian and Silver, David and Vinyals, Oriol and Senior, Andrew W. and Kavukcuoglu, Koray and Kohli, Pushmeet and Hassabis, Demis},
  journal = {Nature},
  volume  = {596},
  pages   = {583--589},
  year    = {2021},
  doi     = {10.1038/s41586-021-03819-2}
}

@misc{jing2024alphafoldmeetsflowmatching,
  title         = {AlphaFold Meets Flow Matching for Generating Protein Ensembles},
  author        = {Bowen Jing and Bonnie Berger and Tommi Jaakkola},
  year          = {2024},
  eprint        = {2402.04845},
  archiveprefix = {arXiv},
  primaryclass  = {q-bio.BM},
  url           = {https://arxiv.org/abs/2402.04845}
}

@inproceedings{proteina2025,
  title     = {Proteina: Scaling Flow-based Protein Structure Generative Models},
  author    = {Geffner, Tomas and Didi, Kieran and Zhang, Zuobai and Reidenbach, Danny and Cao, Zhonglin and Yim, Jason and Geiger, Mario and Dallago, Christian and Kucukbenli, Emine and Vahdat, Arash and Kreis, Karsten},
  booktitle = {International Conference on Learning Representations},
  year      = {2025}
}

@article{bioemu2024,
  title   = {Scalable Emulation of Protein Equilibrium Ensembles with Generative Deep Learning},
  author  = {Lewis, Sarah and Hempel, Tim and Jim{\'e}nez-Luna, Jos{\'e} and Gastegger, Michael and Xie, Yu and Foong, Andrew Y. K. and Satorras, Victor G. and Abdin, Osama and Veeling, Bastiaan S. and Zaporozhets, Iryna and Chen, Yaoyi and Yang, Soojung and Foster, Adam E. and Schneuing, Arne and Nigam, Jigyasa and Barbero, Federico and Stimper, Vincent and Campbell, Andrew and Yim, Jason and Lienen, Marten and Shi, Yu and Zheng, Shuxin and Schulz, Hannes and Munir, Usman and Sordillo, Roberto and Tomioka, Ryota and Clementi, Cecilia and No{\'e}, Frank},
  journal = {Science},
  volume  = {389},
  number  = {6761},
  year    = {2025},
  doi     = {10.1126/science.adv9817}
}

@article{peptron2025,
  title   = {Advancing Protein Ensemble Predictions Across the Order--Disorder Continuum},
  author  = {Invernizzi, Michele and Bottaro, Sandro and Streit, Julian O. and Trentini, Bruno and Niccol{\`o} Alberto Elia Venanzi and Reidenbach, Danny and Lee, Youhan and Dallago, Christian and Sirelkhatim, Hassan and Jing, Bowen and Airoldi, Fabio and Lindorff-Larsen, Kresten and Fisicaro, Carlo and Tamiola, Kamil},
  journal = {bioRxiv},
  year    = {2025},
  doi     = {10.1101/2025.10.18.680935},
  url     = {https://www.biorxiv.org/content/10.1101/2025.10.18.680935v1},
  note    = {Preprint}
}

@article{stam2024atlas,
  title   = {ATLAS: protein flexibility description from atomistic molecular dynamics simulations},
  author  = {Vander Meersche, Yann and Cretin, Gabriel and Gheeraert, Aria and Gelly, Jean-Christophe and Galochkina, Tatiana},
  journal = {Nucleic Acids Research},
  volume  = {52},
  number  = {D1},
  pages   = {D384--D392},
  year    = {2024},
  doi     = {10.1093/nar/gkad1084}
}

@article{ha2015cameo,
  title   = {Continuous Automated Model EvaluatiOn (CAMEO) complementing the critical assessment of structure prediction in CASP12},
  author  = {Haas, Juergen and Barbato, Alessandro and Behringer, Dario and Studer, Gabriel and Roth, Steven and Bertoni, Martino and Mostaguir, Khaled and Gumienny, Rafal and Schwede, Torsten},
  journal = {Proteins: Structure, Function, and Bioinformatics},
  volume  = {86},
  pages   = {387--398},
  year    = {2018},
  doi     = {10.1002/prot.25431}
}

@inproceedings{cuturi2013sinkhorn,
  title     = {Sinkhorn Distances: Lightspeed Computation of Optimal Transport},
  author    = {Cuturi, Marco},
  booktitle = {Advances in Neural Information Processing Systems},
  volume    = {26},
  year      = {2013}
}

@inproceedings{liu2023rectified,
  title     = {Flow Straight and Fast: Learning to Generate and Transfer Data with Rectified Flow},
  author    = {Liu, Xingchao and Gong, Chengyue and Liu, Qiang},
  booktitle = {International Conference on Learning Representations},
  year      = {2023}
}

@article{chen2023importance,
  title     = {On the Importance of Noise Scheduling for Diffusion Models},
  author    = {Chen, Ting},
  journal   = {arXiv preprint arXiv:2301.10972},
  year      = {2023}
}

@article{lu2022dpmsolverpp,
  title     = {{DPM-Solver++}: Fast Solver for Guided Sampling of Diffusion Probabilistic Models},
  author    = {Lu, Cheng and Zhou, Yuhao and Bao, Fan and Chen, Jianfei and Li, Chongxuan and Zhu, Jun},
  journal   = {arXiv preprint arXiv:2211.01095},
  year      = {2022}
}

@inproceedings{zhang2022deis,
  title     = {Fast Sampling of Diffusion Models with Exponential Integrator},
  author    = {Zhang, Qinsheng and Chen, Yongxin},
  booktitle = {International Conference on Learning Representations},
  year      = {2023}
}

@article{sabour2024alignsteps,
  title     = {Align Your Steps: Optimizing Sampling Schedules in Diffusion Models},
  author    = {Sabour, Amirmojtaba and Fidler, Sanja and Kreis, Karsten},
  journal   = {arXiv preprint arXiv:2404.14507},
  year      = {2024}
}

@article{tsimpos2025lipschitz,
  title     = {Optimal Scheduling of Dynamic Transport},
  author    = {Tsimpos, Panos and Ren, Zhi and Zech, Jakob and Marzouk, Youssef},
  journal   = {arXiv preprint arXiv:2504.14425},
  year      = {2025}
}

@article{gushchin2025ibmd,
  title     = {Inverse Bridge Matching Distillation},
  author    = {Gushchin, Nikita and others},
  journal   = {arXiv preprint},
  year      = {2025}
}

@article{he2024cdbm,
  title     = {Consistency Diffusion Bridge Models},
  author    = {He, Guande and others},
  journal   = {arXiv preprint},
  year      = {2024}
}

@article{stancevic2026information,
  title   = {The Information Dynamics of Generative Diffusion},
  author  = {Stan\v{c}evi\'{c}, Dejan and Ambrogioni, Luca},
  journal = {Entropy},
  volume  = {28},
  number  = {2},
  pages   = {195},
  year    = {2026},
  doi     = {10.3390/e28020195}
}

@inproceedings{tan2026stork,
  title     = {{STORK}: Faster Diffusion and Flow Matching Sampling by Resolving both Stiffness and Structure-Dependence},
  author    = {Tan, Zheng and Wang, Weizhen and Bertozzi, Andrea L. and Ryu, Ernest K.},
  booktitle = {International Conference on Learning Representations},
  year      = {2026},
  url       = {https://openreview.net/forum?id=CeOIVXMl4r}
}

@inproceedings{ai2026jointdistillation,
  title     = {Joint Distillation for Fast Likelihood Evaluation and Sampling in Flow-based Models},
  author    = {Ai, Xinyue and He, Yutong and Gu, Albert and Salakhutdinov, Ruslan and Kolter, J. Zico and Boffi, Nicholas Matthew and Simchowitz, Max},
  booktitle = {International Conference on Learning Representations},
  year      = {2026},
  url       = {https://openreview.net/forum?id=8uZ5UdIul2}
}

@inproceedings{passaro2026stochasticfewstep,
  title     = {Stochastic Few-step Models},
  author    = {Passaro, Romeo and Blasingame, Zander W. and Bronstein, Michael M. and Tong, Alexander},
  booktitle = {ICLR 2026 Workshop on Reasoning and Planning for Large Generative Models},
  year      = {2026},
  url       = {https://openreview.net/forum?id=i31AUZF8kO}
}

@inproceedings{min2026bezierflow,
  title     = {{B{\'e}zierFlow}: Learning {B{\'e}zier} Stochastic Interpolant Schedulers for Few-Step Generation},
  author    = {Min, Yunhong and Koo, Juil and Yoo, Seungwoo and Sung, Minhyuk},
  booktitle = {International Conference on Learning Representations},
  year      = {2026},
  url       = {https://openreview.net/forum?id=PCuDI32xhQ}
}

@inproceedings{liu2025asbs,
  title     = {Adjoint {S}chr{\"o}dinger Bridge Sampler},
  author    = {Liu, Guan-Horng and Choi, Jaemoo and Chen, Yongxin and Miller, Benjamin Kurt and Chen, Ricky T. Q.},
  booktitle = {Advances in Neural Information Processing Systems},
  year      = {2025},
  url       = {https://openreview.net/forum?id=rMhQBlhh4c}
}

@inproceedings{tamogashev2026dataenergy,
  title     = {Data-to-Energy Stochastic Dynamics},
  author    = {Tamogashev, Kirill and Malkin, Nikolay},
  booktitle = {International Conference on Learning Representations},
  year      = {2026},
  url       = {https://openreview.net/forum?id=S1JJyWg1VG}
}

@inproceedings{tang2026branched,
  title     = {Branched {S}chr{\"o}dinger Bridge Matching},
  author    = {Tang, Sophia and Zhang, Yinuo and Tong, Alexander and Chatterjee, Pranam},
  booktitle = {International Conference on Learning Representations},
  year      = {2026},
  url       = {https://openreview.net/forum?id=ctq8BfUXWz}
}

@inproceedings{wyrwal2026topological,
  title     = {Topological Flow Matching},
  author    = {Wyrwal, Kacper and Ceylan, Ismail Ilkan and Tong, Alexander},
  booktitle = {International Conference on Learning Representations},
  year      = {2026},
  url       = {https://openreview.net/forum?id=5CM3ax45Ma}
}

@inproceedings{ksenofontov2025categorical,
  title     = {Categorical {S}chr{\"o}dinger Bridge Matching},
  author    = {Ksenofontov, Grigoriy and Korotin, Aleksandr},
  booktitle = {International Conference on Machine Learning},
  year      = {2025},
  url       = {https://icml.cc/virtual/2025/poster/45290}
}

@inproceedings{irwin2024molflow,
  title     = {Efficient 3D Molecular Generation with Flow Matching and Scale Optimal Transport},
  author    = {Irwin, Ross and Tibo, Alessandro and Janet, Jon Paul and Olsson, Simon},
  booktitle = {ICML Workshop on AI for Science},
  year      = {2024},
  url       = {https://icml.cc/virtual/2024/36820}
}

@inproceedings{miller2024flowmm,
  title     = {{FlowMM}: Generating Materials with Riemannian Flow Matching},
  author    = {Miller, Benjamin Kurt and Chen, Ricky T. Q. and Sriram, Anuroop and Wood, Brandon},
  booktitle = {International Conference on Machine Learning},
  year      = {2024},
  url       = {https://icml.cc/virtual/2024/poster/33890}
}

@inproceedings{haviv2025wasserstein,
  title     = {Wasserstein Flow Matching: Generative Modeling Over Families of Distributions},
  author    = {Haviv, Doron and Pooladian, Aram-Alexandre and Pe'er, Dana and Amos, Brandon},
  booktitle = {International Conference on Machine Learning},
  year      = {2025},
  url       = {https://icml.cc/virtual/2025/poster/45541}
}

@inproceedings{ziebart2008maximum,
    title     = {Maximum Entropy Inverse Reinforcement Learning},
    author    = {Ziebart, Brian D. and Maas, Andrew and Bagnell, J. Andrew and Dey, Anind K.},
    booktitle = {Proceedings of the 23rd {AAAI} Conference on Artificial Intelligence},
    pages     = {1433--1438},
    year      = {2008}
  }

@inproceedings{haarnoja2018soft,
    title     = {Soft Actor-Critic: Off-Policy Maximum Entropy Deep Reinforcement Learning with a Stochastic Actor},
    author    = {Haarnoja, Tuomas and Zhou, Aurick and Abbeel, Pieter and Levine, Sergey},
    booktitle = {Proceedings of the 35th International Conference on Machine Learning},
    series    = {Proceedings of Machine Learning Research},
    volume    = {80},
    pages     = {1861--1870},
    year      = {2018},
    url       = {https://proceedings.mlr.press/v80/haarnoja18b.html}
  }

@article{tishby2000information,
    title   = {The Information Bottleneck Method},
    author  = {Tishby, Naftali and Pereira, Fernando C. and Bialek, William},
    journal = {arXiv preprint physics/0004057},
    year    = {2000},
    url     = {https://arxiv.org/abs/physics/0004057}
  }

@article{bell1995infomax,
    title   = {An Information-Maximization Approach to Blind Separation and Blind Deconvolution},
    author  = {Bell, Anthony J. and Sejnowski, Terrence J.},
    journal = {Neural Computation},
    volume  = {7},
    number  = {6},
    pages   = {1129--1159},
    year    = {1995},
    doi     = {10.1162/neco.1995.7.6.1129}
  }

@inproceedings{chen2016infogan,
    title     = {{InfoGAN}: Interpretable Representation Learning by Information Maximizing Generative Adversarial Nets},
    author    = {Chen, Xi and Duan, Yan and Houthooft, Rein and Schulman, John and Sutskever, Ilya and Abbeel, Pieter},
    booktitle = {Advances in Neural Information Processing Systems},
    volume    = {29},
    year      = {2016},
    url       = {https://papers.nips.cc/paper/6399-infogan-interpretable-representation-learning-by-information-maximizing-generative-adversarial-nets}
  }

@inproceedings{belghazi2018mine,
    title     = {Mutual Information Neural Estimation},
    author    = {Belghazi, Mohamed Ishmael and Baratin, Aristide and Rajeshwar, Sai and Ozair, Sherjil and Bengio, Yoshua and Courville, Aaron and Hjelm, Devon},
    booktitle = {Proceedings of the 35th International Conference on Machine Learning},
    series    = {Proceedings of Machine Learning Research},
    volume    = {80},
    pages     = {531--540},
    year      = {2018},
    url       = {https://proceedings.mlr.press/v80/belghazi18a.html}
  }

@misc{livne2019mim,
    title         = {{MIM}: Mutual Information Machine},
    author        = {Livne, Micha and Swersky, Kevin and Fleet, David J.},
    year          = {2019},
    eprint        = {1910.03175},
    archiveprefix = {arXiv},
    primaryclass  = {cs.LG},
    doi           = {10.48550/arXiv.1910.03175},
    url           = {https://arxiv.org/abs/1910.03175}
  }

@misc{park2024peptidedpo,
    title         = {Improving Inverse Folding for Peptide Design with Diversity-regularized Direct Preference Optimization},
    author        = {Park, Ryan and Hsu, Darren J. and Roland, C. Brian and Korshunova, Maria and Tessler, Chen and Mannor, Shie and Viessmann, Olivia and Trentini,
  Bruno},
    year          = {2024},
    eprint        = {2410.19471},
    archiveprefix = {arXiv},
    primaryclass  = {cs.LG},
    doi           = {10.48550/arXiv.2410.19471},
    url           = {https://arxiv.org/abs/2410.19471}
  }


\appendix

\newpage
\begin{center}
{\Large\bfseries Appendix}
\end{center}
\vspace{1em}

\section{A guided derivation of the conditional--marginal rate}
\label{app:full_derivations}

\paragraph{Marginal entropy rate.}
\label{app:marginal_entropy}

We assumed in the main text that the density and vector field are smooth enough for integration by parts and that boundary terms vanish. We now walk through the calculation. Consider first the marginal entropy
\begin{equation}
  \Ent(X_t)=-\int p_t(\vx)\log p_t(\vx)\,\dd \vx,
\end{equation}
and differentiate with respect to time:
\begin{align}
  \frac{\dd}{\dd t}\Ent(X_t)
  &=
  -\int \partial_t p_t(\vx)\log p_t(\vx)\,\dd\vx
  -\int p_t(\vx)\frac{\partial_t p_t(\vx)}{p_t(\vx)}\,\dd\vx \\
  &=
  -\int \partial_t p_t(\vx)\log p_t(\vx)\,\dd\vx
  -\frac{\dd}{\dd t}\int p_t(\vx)\,\dd\vx \\
  &=
  -\int \partial_t p_t(\vx)\log p_t(\vx)\,\dd\vx.
\end{align}
The second term vanishes because $p_t$ remains normalized. Next, use the continuity equation,
\begin{align}
  \frac{\dd}{\dd t}\Ent(X_t)
  &=
  \int \diver(p_t(\vx)\bar{\vv}_t(\vx))\log p_t(\vx)\,\dd\vx \\
  &=
  -\int p_t(\vx)\bar{\vv}_t(\vx)^\top \nabla_{\vx}\log p_t(\vx)\,\dd\vx \\
  &=
  -\int \bar{\vv}_t(\vx)^\top \nabla_{\vx}p_t(\vx)\,\dd\vx \\
  &=
  \int p_t(\vx)\diver \bar{\vv}_t(\vx)\,\dd\vx \\
  &=
  \E_{X_t}\!\left[\diver\bar{\vv}_t(X_t)\right].
\end{align}

\paragraph{Conditional entropy rate.}
\label{app:conditional_entropy}

The conditional calculation repeats the same story while holding the bridge condition fixed. For each condition $Z=z$, the conditional density follows its own continuity equation, so
\begin{equation}
  \frac{\dd}{\dd t}\Ent(X_t\mid Z=z)
  =
  \E_{X_t\mid Z=z}\!\left[\diver \vv_t(X_t\mid z)\right].
\end{equation}
Averaging over $Z$ gives
\begin{equation}
  \frac{\dd}{\dd t}\Ent(X_t\mid Z)
  =
  \E_{Z,X_t\mid Z}\!\left[\diver \vv_t(X_t\mid Z)\right].
\end{equation}
Now take into consideration the entropy chain rule,
\begin{equation}
  \Ent(Z\mid X_t)=\Ent(X_t\mid Z)+\Ent(Z)-\Ent(X_t),
\end{equation}
and subtract the marginal entropy rate derived above. We obtain
\begin{equation}
  \frac{\dd}{\dd t}\Ent(Z\mid X_t)
  =
  \E_{Z,X_t\mid Z}\!\left[\diver \vv_t(X_t\mid Z)\right]
  -
  \E_{X_t}\!\left[\diver \bar{\vv}_t(X_t)\right].
\end{equation}


\paragraph{Score representation.}
\label{app:score_derivation}

The divergence form is the estimator used in this paper, but the score form clarifies what is being measured. The marginal field is
\begin{equation}
  \bar{\vv}_t(\vx)=\int p_t(z\mid \vx)\vv_t(\vx\mid z)\,\dd z.
\end{equation}
Taking divergence,
\begin{align}
  \diver \bar{\vv}_t(\vx)
  &=
  \int p_t(z\mid\vx)\diver \vv_t(\vx\mid z)\,\dd z
  +
  \int \nabla_{\vx}p_t(z\mid\vx)^\top \vv_t(\vx\mid z)\,\dd z.
\end{align}
Multiply by $p_t(\vx)$ and integrate over $\vx$. The first term becomes the expected conditional divergence. For the second term, Bayes' rule gives
\begin{equation}
  \nabla_{\vx}\log p_t(z\mid\vx)
  =
  \nabla_{\vx}\log p_t(\vx\mid z)
  -
  \nabla_{\vx}\log p_t(\vx).
\end{equation}
Therefore
\begin{align}
  \E_{X_t}\!\left[\diver\bar{\vv}_t(X_t)\right]
  &=
  \E_{Z,X_t\mid Z}\!\left[\diver\vv_t(X_t\mid Z)\right] \nonumber \\
  &\quad+
  \E_{Z,X_t\mid Z}\!\left[
  (\nabla\log p_t(X_t\mid Z)-\nabla\log p_t(X_t))^\top
  \vv_t(X_t\mid Z)
  \right].
\end{align}
Rearranging yields \Cref{eq:score_form}. This shows that the rate measures the conditional field against the gap between the conditional score and the marginal score.

\paragraph{Gaussian conditional vector field.}
\label{app:gaussian_derivation}

The Brownian-bridge calculation in the main text is a special case of a Gaussian conditional path. Let
\begin{equation}
  X_t=\mu(z,t)+\sigma(z,t)\veps,\qquad \veps\sim\mathcal{N}(0,I).
\end{equation}
For fixed noise $\veps$,
\begin{equation}
  \partial_t X_t=\partial_t\mu(z,t)+\partial_t\sigma(z,t)\veps.
\end{equation}
Since $\veps=(X_t-\mu(z,t))/\sigma(z,t)$,
\begin{equation}
  \vv_t(X_t\mid z)=\partial_t\mu(z,t)+\partial_t\sigma(z,t)\frac{X_t-\mu(z,t)}{\sigma(z,t)}.
\end{equation}
The Gaussian score is
\begin{equation}
  \nabla_{X_t}\log p_t(X_t\mid z)=-\frac{X_t-\mu(z,t)}{\sigma^2(z,t)}.
\end{equation}
Substitution gives
\begin{equation}
  \vv_t(X_t\mid z)
  =
  \partial_t\mu(z,t)
  -
  \sigma(z,t)\partial_t\sigma(z,t)
  \nabla_{X_t}\log p_t(X_t\mid z).
\end{equation}

\section{Brownian bridge details}
\label{app:bb_details}

\subsection{Probability-flow field}

For $m_t=(1-t)x_0+tx_1$ and $\sigma(t)=\sigma_0\sqrt{t(1-t)}$,
\begin{equation}
  \partial_t m_t=x_1-x_0.
\end{equation}
The time derivative of the standard deviation is
\begin{align}
  \partial_t\sigma(t)
  &=
  \sigma_0\frac{1}{2\sqrt{t(1-t)}}(1-2t),
\end{align}
so
\begin{equation}
  \sigma(t)\partial_t\sigma(t)
  =
  \sigma_0\sqrt{t(1-t)}
  \frac{\sigma_0(1-2t)}{2\sqrt{t(1-t)}}
  =
  \frac{\sigma_0^2(1-2t)}{2}.
\end{equation}
The conditional score is
\begin{equation}
  \nabla_{\vx}\log p_t(\vx\mid x_0,x_1)
  =
  -\frac{\vx-m_t}{\sigma_0^2t(1-t)}.
\end{equation}
Substituting into the Gaussian field,
\begin{align}
  \vv_t(\vx\mid x_0,x_1)
  &=
  x_1-x_0
  -
  \frac{\sigma_0^2(1-2t)}{2}
  \left[-\frac{\vx-m_t}{\sigma_0^2t(1-t)}\right] \\
  &=
  x_1-x_0
  +
  \frac{1-2t}{2t(1-t)}(\vx-m_t).
\end{align}

\subsection{SDE drift and the factor of two}
\label{app:sde_ode_details}

The Brownian-bridge SDE drift is
\begin{equation}
  u^o_t(\vx\mid x_0,x_1)
  =
  x_1-x_0
  +
  \frac{1-2t}{t(1-t)}(\vx-m_t).
\end{equation}
Comparing with the probability-flow field gives
\begin{equation}
  u^o_t(\vx\mid x_0,x_1)
  =
  2\vv_t(\vx\mid x_0,x_1)-(x_1-x_0).
\end{equation}
The factor of two arises from the chain-rule term $\sigma(t)\partial_t\sigma(t)$ in the probability-flow derivation. If one applies the marginal probability-flow transformation to the conditional objects, the score correction cancels the bridge contraction term incorrectly:
\begin{align}
  u^o_t+\sigma_0^2(1-2t)\nabla_{\vx}\log p_t(\vx\mid x_0,x_1)
  &=
  x_1-x_0
  +
  \frac{1-2t}{t(1-t)}(\vx-m_t)
  -
  \frac{1-2t}{t(1-t)}(\vx-m_t) \\
  &=
  x_1-x_0.
\end{align}
The result differs from \Cref{eq:bb_ode_field}. The probability-flow relation is valid for marginal fields after endpoint mixing, not for this conditional conversion.

\section{Interpolation profiles and bridge specificity}
\label{app:interpolation_profiles}

We raised an important scope question: is the U-shape a generic property of every interpolation, or a bridge-specific calculation? The answer is bridge-specific. Different interpolants produce different analytic divergence profiles, and the learned field can deviate from the exact conditional formula. This is why the main text treats the Brownian-bridge profile as an anchor and the conditional--marginal estimator as the general object.

\begin{table}[H]
  \caption{Analytic divergence profiles for common conditional interpolants. Here $d$ is the ambient dimension. The table illustrates that different path choices lead to different schedule signals.}
  \label{tab:interpolation_profiles}
  \centering
  \small
  \begin{tabular}{P{0.24\linewidth}P{0.42\linewidth}P{0.22\linewidth}}
    \toprule
    Interpolation & Exact conditional divergence & Qualitative profile \\
    \midrule
    Linear optimal-transport interpolation & $0$ for the exact linear conditional field & Flat before learning error \\
    Variance-preserving diffusion path & $d\,t/(1-t^2)$ under the standard VP parameterization & Monotone boundary growth \\
    Cosine diffusion path & $d\pi\tan(\pi t/2)/2$ under the cosine noise parameterization & Monotone boundary growth \\
    Brownian bridge & $d(1-2t)/[2t(1-t)]$ for the probability-flow field & Symmetric U-shape in magnitude \\
    \bottomrule
  \end{tabular}
\end{table}

For Schr\"odinger bridges represented as mixtures of Brownian bridges, the conditional Brownian divergence in \Cref{eq:bb_divergence} does not depend on the endpoint pair $(x_0,x_1)$ or on the spatial position $\vx$. Therefore
\begin{equation}
  \E_{(X_0,X_1),X_t\mid X_0,X_1}
  \left[\diver\vv_t(X_t\mid X_0,X_1)\right]
  =
  d\,\frac{1-2t}{2t(1-t)}
\end{equation}
for the conditional term of the Brownian reference bridge. The marginal term in \Cref{eq:cm_identity} remains model- and coupling-dependent, which is why the full bridge rate is not reduced to the conditional term alone.

\section{Boundary concentration ratio}
\label{app:bcr}

We introduced boundary concentration ratio after observing that many useful schedules differ mainly in where they place density near $t=0$ and $t=1$. The entropy rate gives a full curve, but comparisons across many schedules are easier when one also has a scalar diagnostic. The diagnostic should answer a narrow question: how much more schedule mass lies near the two endpoints than would lie there under a uniform grid?

Take into consideration a normalized schedule density $q(t)$ on $[0,1]$ with $\int_0^1q(t)\dd t=1$. For a boundary width $\epsilon\in(0,1/2)$, define the endpoint mass
\begin{equation}
  M_\epsilon(q)
  =
  \int_0^\epsilon q(t)\,\dd t+\int_{1-\epsilon}^1 q(t)\,\dd t.
\end{equation}
The same two endpoint intervals have mass $2\epsilon$ under a uniform grid density. We therefore define
\begin{equation}
  \bcr_\epsilon(q)
  =
  \frac{M_\epsilon(q)}{2\epsilon}.
  \label{eq:bcr}
\end{equation}
A uniform grid has $\bcr_\epsilon=1$. Values larger than one mean that the schedule assigns more node density to the boundary windows than uniform spacing. Values below one mean that the schedule avoids the endpoints relative to the uniform grid.

For a discrete grid, one can estimate $q$ from the interval widths or from the inverse-CDF density used to construct the grid. The density-based version is preferable because it is less sensitive to whether the endpoints themselves are counted as nodes. The assumptions behind BCR are simple: the same $\epsilon$ must be used for all schedules, the grid must be monotone, and the diagnostic should be computed on the schedule density rather than on downstream metrics.

BCR is useful beyond this paper because it separates a geometric property of the grid from sample quality. A high BCR can indicate endpoint stiffness, bridge contraction, or a raw entropy singularity. It can also warn that tempering may be needed, since over-resolving a singular endpoint can waste low-NFE budget. BCR is therefore a schedule diagnostic and not a main claim. A model can have high boundary concentration and still perform poorly if the estimator is mismatched or if the training objective favors another grid.

\section{ODE error derivation}
\label{app:ode_error}

For an order-$p$ one-step method, suppose the local truncation error on a step of width $h$ is bounded by
\begin{equation}
  \|e_{k+1}\|\le C(t_k)h_k^{p+1}+O(h_k^{p+2}).
\end{equation}

Let $\rho(t)$ be a continuous density of grid points. Locally $h(t)\approx 1/(N\rho(t))$. Ignoring constants independent of $\rho$, a continuous proxy for total error is
\begin{equation}
  \mathcal{E}[\rho]=\int_0^1 C(t)\rho(t)^{-p}\,\dd t,
  \qquad
  \int_0^1\rho(t)\,\dd t=1.
\end{equation}
The Lagrangian is
\begin{equation}
  \mathcal{L}[\rho]
  =
  \int_0^1 C(t)\rho(t)^{-p}\,\dd t
  +
  \lambda\left(\int_0^1\rho(t)\,\dd t-1\right).
\end{equation}
The stationarity condition is
\begin{equation}
  -pC(t)\rho(t)^{-p-1}+\lambda=0,
\end{equation}
which gives
\begin{equation}
  \rho(t)\propto C(t)^{1/(p+1)}.
\end{equation}
Entropy-rate scheduling uses $|\dot{\Ent}(Z\mid X_t)|$ as a measurable proxy for $C(t)$.

\section{Algorithms}
\label{app:algorithms}

This appendix states the inference-time pipeline as algorithms. The algorithms do not assume a particular implementation. They separate the calibration stage, where the entropy-rate curve is estimated once, from the sampling stage, where all schedulers are compared at matched NFE.

\begin{algorithm}[H]
  \caption{Calibration Set and Time Mesh Construction}
  \label{alg:calibration}
  \begin{algorithmic}[1]
    \REQUIRE Calibration conditions $\{z_i\}_{i=1}^n$, time range $[\epsilon,1-\epsilon]$, number of mesh points $M$
    \ENSURE Calibration states $\{x_{ij}\}$ and paired conditions $\{z_i,s_j\}$
    \STATE Choose a monotone mesh $0<\epsilon=s_1<\cdots<s_M=1-\epsilon<1$
    \FOR{$i=1,\ldots,n$}
      \STATE Draw or retrieve the bridge condition $z_i$
      \FOR{$j=1,\ldots,M$}
        \STATE Construct the state $x_{ij}\sim p_{s_j}(\cdot\mid z_i)$ when the conditional simulator is available
        \STATE Otherwise, use a model trajectory state paired with its condition and time
      \ENDFOR
    \ENDFOR
    \RETURN Paired calibration set $\{(x_{ij},z_i,s_j)\}$
  \end{algorithmic}
\end{algorithm}

\begin{algorithm}[H]
  \caption{Hutchinson Divergence of a Vector Field}
  \label{alg:hutchinson}
  \begin{algorithmic}[1]
    \REQUIRE Vector field $f_t$, state $x$, time $t$, probes $\{\vu_\ell\}_{\ell=1}^m$ with $\E[\vu_\ell\vu_\ell^\top]=I$
    \ENSURE Estimate of $\diver f_t(x)$
    \STATE $a \leftarrow 0$
    \FOR{$\ell=1,\ldots,m$}
      \STATE Evaluate the Jacobian-vector product $w_\ell \leftarrow \nabla_x f_t(x)\vu_\ell$
      \STATE $a \leftarrow a + \vu_\ell^\top w_\ell$
    \ENDFOR
    \RETURN $a/m$
  \end{algorithmic}
\end{algorithm}

\begin{algorithm}[H]
  \caption{Conditional--Marginal Entropy-Rate Estimation}
  \label{alg:cm_estimator}
  \begin{algorithmic}[1]
    \REQUIRE Calibration set $\{(x_{ij},z_i,s_j)\}$, conditional field $\vv_t(\cdot\mid z)$, marginal field $\bar{\vv}_t$, probes per state $m$
    \ENSURE Estimated rate values $\{\hat r(s_j)\}_{j=1}^M$ and uncertainty estimates
    \FOR{$j=1,\ldots,M$}
      \STATE $d_{j1},\ldots,d_{jn} \leftarrow$ empty
      \FOR{$i=1,\ldots,n$}
        \STATE Draw shared probes $\{\vu_{\ell ij}\}_{\ell=1}^m$
        \IF{analytic conditional divergence is available}
          \STATE $c_{ij}\leftarrow \diver\vv_{s_j}(x_{ij}\mid z_i)$ from the analytic formula
        \ELSE
          \STATE $c_{ij}\leftarrow$ \Cref{alg:hutchinson} applied to $\vv_{s_j}(\cdot\mid z_i)$ at $x_{ij}$
        \ENDIF
        \STATE $m_{ij}\leftarrow$ \Cref{alg:hutchinson} applied to $\bar{\vv}_{s_j}$ at $x_{ij}$ with the same probes
        \STATE $d_{ji}\leftarrow c_{ij}-m_{ij}$
      \ENDFOR
      \STATE $\hat r(s_j)\leftarrow \left|n^{-1}\sum_{i=1}^n d_{ji}\right|$
      \STATE $\widehat{\mathrm{se}}(s_j)\leftarrow$ standard error of $\{d_{ji}\}_{i=1}^n$
    \ENDFOR
    \RETURN Rate curve $\hat r$ and uncertainty curve $\widehat{\mathrm{se}}$
  \end{algorithmic}
\end{algorithm}

\begin{algorithm}[H]
  \caption{Rate Postprocessing and Grid Construction}
  \label{alg:grid}
  \begin{algorithmic}[1]
    \REQUIRE Mesh $\{s_j\}_{j=1}^M$, estimated rate $\hat r$, transform $\phi\in\{r,\log(1+r)\}$, sampling steps $N$
    \ENSURE Monotone grid $0=t_0<t_1<\cdots<t_N=1$
    \STATE Replace non-finite values by local interpolation and clip $\hat r$ to a small positive floor
    \STATE Smooth the clipped curve only on the calibration mesh, preserving endpoint ordering
    \STATE $a_j\leftarrow \phi(\hat r(s_j))$ for $j=1,\ldots,M$
    \STATE Normalize $a_j$ into a density $q_j$ by numerical quadrature
    \STATE Compute the cumulative curve $Q(s_j)\approx\int_0^{s_j}q(u)\dd u$
    \STATE Enforce monotonicity of $Q$ and set $Q(0)=0$, $Q(1)=1$
    \FOR{$k=0,\ldots,N$}
      \STATE Set $t_k\leftarrow Q^{-1}(k/N)$ by monotone interpolation
    \ENDFOR
    \RETURN Grid $\{t_k\}_{k=0}^N$ and density summary $q$
  \end{algorithmic}
\end{algorithm}

\begin{algorithm}[H]
  \caption{Matched-NFE Evaluation Protocol}
  \label{alg:evaluation}
  \begin{algorithmic}[1]
    \REQUIRE Model sampler, scheduler family $\mathcal{S}$, NFE budgets $\mathcal{N}$, seeds $\mathcal{B}$, evaluation metrics $\mathcal{M}$
    \ENSURE Matched comparison table across schedulers
    \FOR{scheduler $S\in\mathcal{S}$}
      \FOR{budget $N\in\mathcal{N}$}
        \STATE Build the grid for $S$ with exactly $N$ intervals
        \FOR{seed $b\in\mathcal{B}$}
          \STATE Generate samples with the same model, solver family, and NFE budget
          \STATE Evaluate all metrics in $\mathcal{M}$ on the generated samples
        \ENDFOR
      \ENDFOR
    \ENDFOR
    \STATE Aggregate means, standard deviations, and paired deltas relative to selected baselines
    \RETURN Matched-NFE metric table and per-sample records
  \end{algorithmic}
\end{algorithm}

\begin{algorithm}[H]
  \caption{Paired Bootstrap Confidence Intervals}
  \label{alg:bootstrap}
  \begin{algorithmic}[1]
    \REQUIRE Per-unit metric records for scheduler $A$ and baseline $B$, bootstrap draws $R$, confidence level $1-\alpha$
    \ENSURE Mean paired difference and percentile interval
    \STATE Match records by evaluation unit, seed, dataset split, and NFE budget
    \FOR{$r=1,\ldots,R$}
      \STATE Resample matched units with replacement
      \STATE Compute the paired difference $\Delta_r=\mathrm{metric}(A)-\mathrm{metric}(B)$ on the resample
    \ENDFOR
    \STATE Report $\bar{\Delta}$ on the original matched records
    \STATE Report percentiles $\alpha/2$ and $1-\alpha/2$ of $\{\Delta_r\}_{r=1}^R$
    \RETURN Paired estimate and confidence interval
  \end{algorithmic}
\end{algorithm}

\begin{algorithm}[H]
  \caption{Boundary-Concentration Diagnostic}
  \label{alg:bcr}
  \begin{algorithmic}[1]
    \REQUIRE Schedule density $q(t)$ or grid density estimate, boundary width $\epsilon\in(0,1/2)$
    \ENSURE Boundary concentration ratio $\bcr_\epsilon$
    \STATE Estimate boundary mass $M_\epsilon=\int_0^\epsilon q(t)\dd t+\int_{1-\epsilon}^1q(t)\dd t$
    \STATE Normalize by the uniform-grid boundary mass $2\epsilon$
    \STATE $\bcr_\epsilon\leftarrow M_\epsilon/(2\epsilon)$
    \RETURN $\bcr_\epsilon$
  \end{algorithmic}
\end{algorithm}

\section{Computational notes}
\label{app:compute}

\begin{table}[H]
  \caption{Complexity of grid construction. Sampling cost at fixed NFE is unchanged after the grid has been constructed.}
  \label{tab:complexity}
  \centering
  \small
  \begin{tabular}{P{0.22\linewidth}P{0.24\linewidth}P{0.22\linewidth}P{0.20\linewidth}}
    \toprule
    Method & Calibration compute & Extra memory & TFLOPS/byte note \\
    \midrule
    Analytic bridge rate & $O(M)$ scalar evaluations & $O(N)$ grid & Negligible \\
    Exact divergence & $O(Mnd)$ derivative products & Large Jacobian or repeated products & Usually impractical \\
    Hutchinson rate & $O(Mnm)$ derivative-vector products & Probe buffers plus activations & Kernel-dependent \\
    Reused entropy grid & None during sampling & $O(N)$ grid & Same NFE as baseline \\
    \bottomrule
  \end{tabular}
\end{table}

Here $M$ is the number of calibration times, $n$ the number of calibration states, $m$ the number of Hutchinson probes, $d$ the dimension, and $N$ the number of sampling steps. Hardware-level TFLOPS/byte depends on kernels, precision, and hardware, so the robust comparison is symbolic.

\section{Hypotheses and conclusions}
\label{app:hypotheses}

We guided the work with three overall questions. The first asks whether the bridge calculation predicts a distinctive information geometry. The second asks whether learned two-dimensional probes preserve that geometry after the vector field is trained. The third asks whether cond--marg estimator and its grid help interpret high-dimensional low-NFE behavior without turning an allocation diagnostic into an endpoint-quality claim.


\begin{table}[H]
  \caption{Questions that guided the evidence and the corresponding conclusions.}
  \label{tab:hypotheses}
  \centering
  \small
  \begin{tabular}{P{0.28\linewidth}P{0.42\linewidth}P{0.18\linewidth}}
    \toprule
    Guiding question & Evidence & Conclusion \\
    \midrule
    What quantity should a bridge schedule measure? & \Cref{eq:cm_identity} and \Cref{eq:bb_divergence} show that the bridge rate is a conditional--marginal divergence contrast, with a U-shaped Brownian-bridge conditional term. & Use a two-term bridge rate, not a single-field proxy. \\
    Does this geometry appear in high dimension? & AlphaFlow conditional--marginal profiles show a boundary-heavy U-shape, while EDM gives a different profile and therefore a useful mismatch case. & The rate is model-dependent and must be measured or derived. \\
    Can the measured rate guide low-NFE sampling? & EDM five-step FID improves with log-tempered entropy, and AlphaFlow allocation diagnostics align with the bridge calculation while endpoint pLDDT remains training-objective dependent. & Entropy is a grid signal, not a universal endpoint-quality guarantee. \\
    \bottomrule
  \end{tabular}
\end{table}

\section{Additional method comparison}
\label{app:comparison}

\begin{table}[H]
  \caption{Comparison with related acceleration and scheduling families.}
  \label{tab:comparison}
  \centering
  \small
  \begin{tabular}{P{0.24\linewidth}P{0.30\linewidth}P{0.32\linewidth}}
    \toprule
    Family & Main object & Relation to this work \\
    \midrule
    Heuristic grids & Fixed analytic time maps & Strong baselines, but not path-measured \\
    Align Your Steps & Optimized discrete schedule & Empirical optimization rather than entropy identity \\
    Lipschitz and transport scheduling & Smoothness or transport-cost proxy & Related to error constants, no conditional--marginal bridge decomposition \\
    Fast ODE solvers & Numerical update formula & Complementary to grid choice \\
    Distillation & New few-step model & Requires training, unlike inference-time grid construction \\
    Bridge and flow coupling & Training path geometry & Changes the learned path rather than measuring the grid density \\
    \bottomrule
  \end{tabular}
\end{table}

    \begin{table}[H]
      \caption{Closest entropy-based scheduler comparison. The bridge-specific change is the conditional--marginal divergence contrast.}
      \label{tab:stancevic_appendix}
      \centering
      \small
      \setlength{\tabcolsep}{3pt}
      \begin{tabular}{@{}P{0.14\linewidth}@{\hspace{0.02\linewidth}}P{0.39\linewidth}@{\hspace{0.02\linewidth}}P{0.39\linewidth}@{}}
        \toprule
        Aspect & Entropic diffusion time & Conditional--marginal bridge time \\
        \midrule
        Object & One-endpoint diffusion process & Conditional bridge or flow path \\
        Signal & Entropy of data given noised state & $\dot{\Ent}(Z\mid X_t)$ from conditional minus marginal divergence \\
        Estimator & Diffusion loss or single-field formula & Matched conditional and marginal divergence estimates \\
        Shape & Often asymmetric in noise time & Symmetric U-shape for Brownian bridges \\
        Use & Inference-time reparameterization & Inference-time grid for bridge and flow samplers \\
        \bottomrule
      \end{tabular}
    \end{table}

\section{Protein endpoint confidence metrics}
\label{app:plddt_metric}

Protein endpoint quality is reported with predicted local distance difference test (plDDT), a per-residue confidence estimate associated with distance agreements \citep{jumper2021, mariani2013lddt}. The score is usually reported on a 0--100 scale. Higher values indicate higher local confidence. We average plDDT over generated endpoint structures and use it as an endpoint confidence \emph{proxy}. This is useful because it is cheap, standardized, and sensitive to local structural plausibility. However, it is not the same object as the cond--marg entropy rate. plDDT does not measure where the sampler placed time steps, and it can depend on the target protein, sequence length, flexible or disordered regions, model calibration, and the evaluator. Hence, pLDDT is interpreted together with diversity and trajectory diagnostics, not as direct proof that one grid is the correct entropy grid.

\paragraph{Protein evaluation funnel.}
\label{app:protein_eval_funnel}
CAMEO22 and ATLAS are used here as evaluation sources, not as evidence that the entropy grid was optimized during AlphaFlow training. The Sankey diagram in \Cref{fig:alphaflow_testing_sankey_appendix} separates the available benchmark proteins from the subset that passed the native-score testing path. The count table behind the figure reports 48 available ATLAS proteins and 72 available CAMEO proteins; 115 proteins were tested after filtering (45 ATLAS, 70 CAMEO), with give large proteins not tested due to system timeouts. These counts describe evaluation coverage. They should be read separately from endpoint pLDDT, which remains a confidence proxy, and separately from the cond--marg rate, which is the schedule-geomtry signal.

\section{Toy and synthetic data}
\label{app:figures}

The synthetic appendix is included to show what the theory predicts before the high-dimensional stress tests. These plots are not used as the main evidence. They help the reader see why a bridge can ask for a non-uniform grid even when the downstream model is simple.

    \begin{figure}[!t]
      \centering
      \begin{minipage}[t]{0.49\linewidth}
        \centering
        \adjustbox{max width=\linewidth, max height=0.32\textheight, valign=t}{%
          \includegraphics{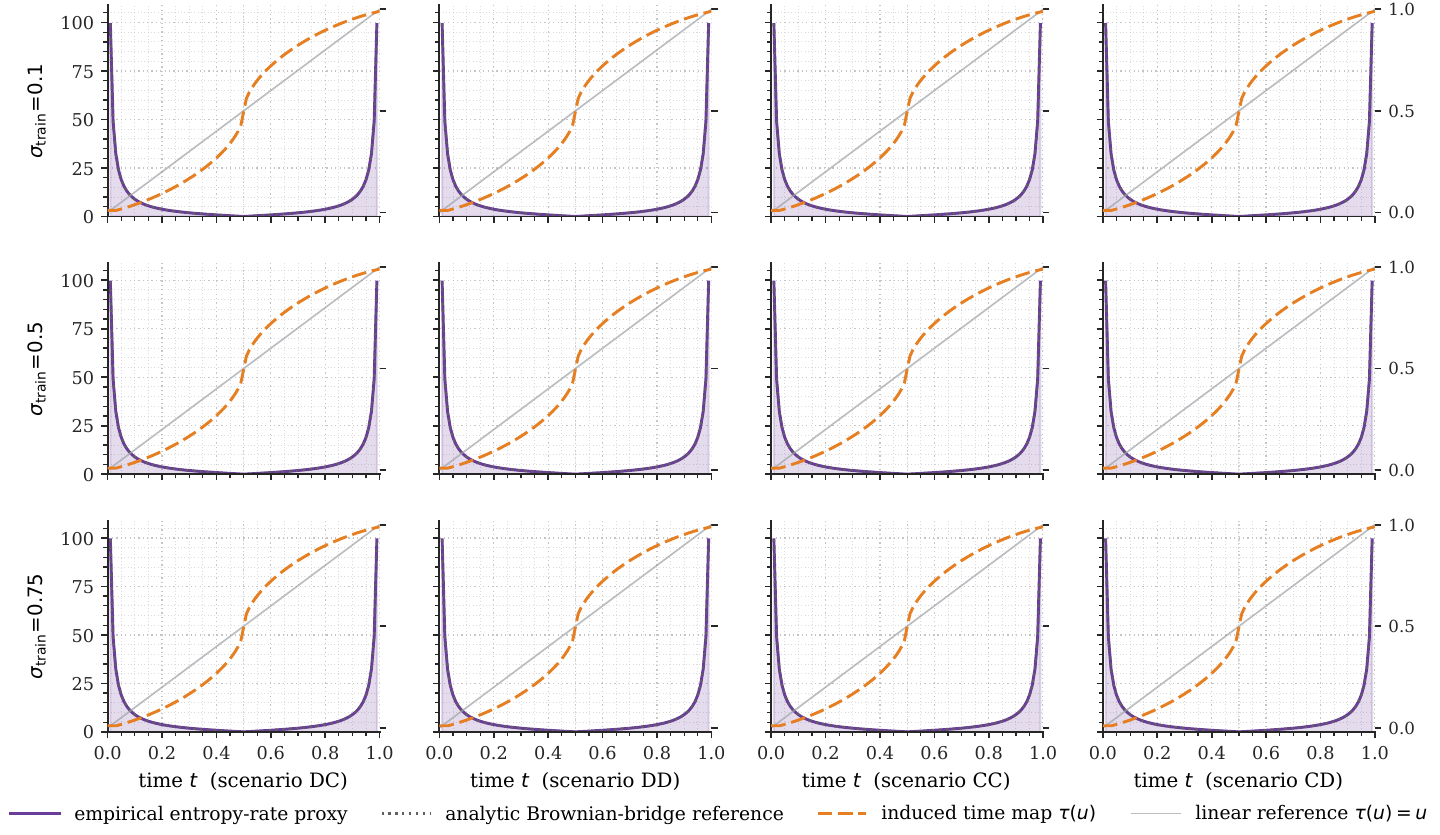}%
        }
      \end{minipage}
      \hfill
      \begin{minipage}[t]{0.49\linewidth}
        \centering
        \adjustbox{max width=\linewidth, max height=0.32\textheight, valign=t}{%
          \includegraphics{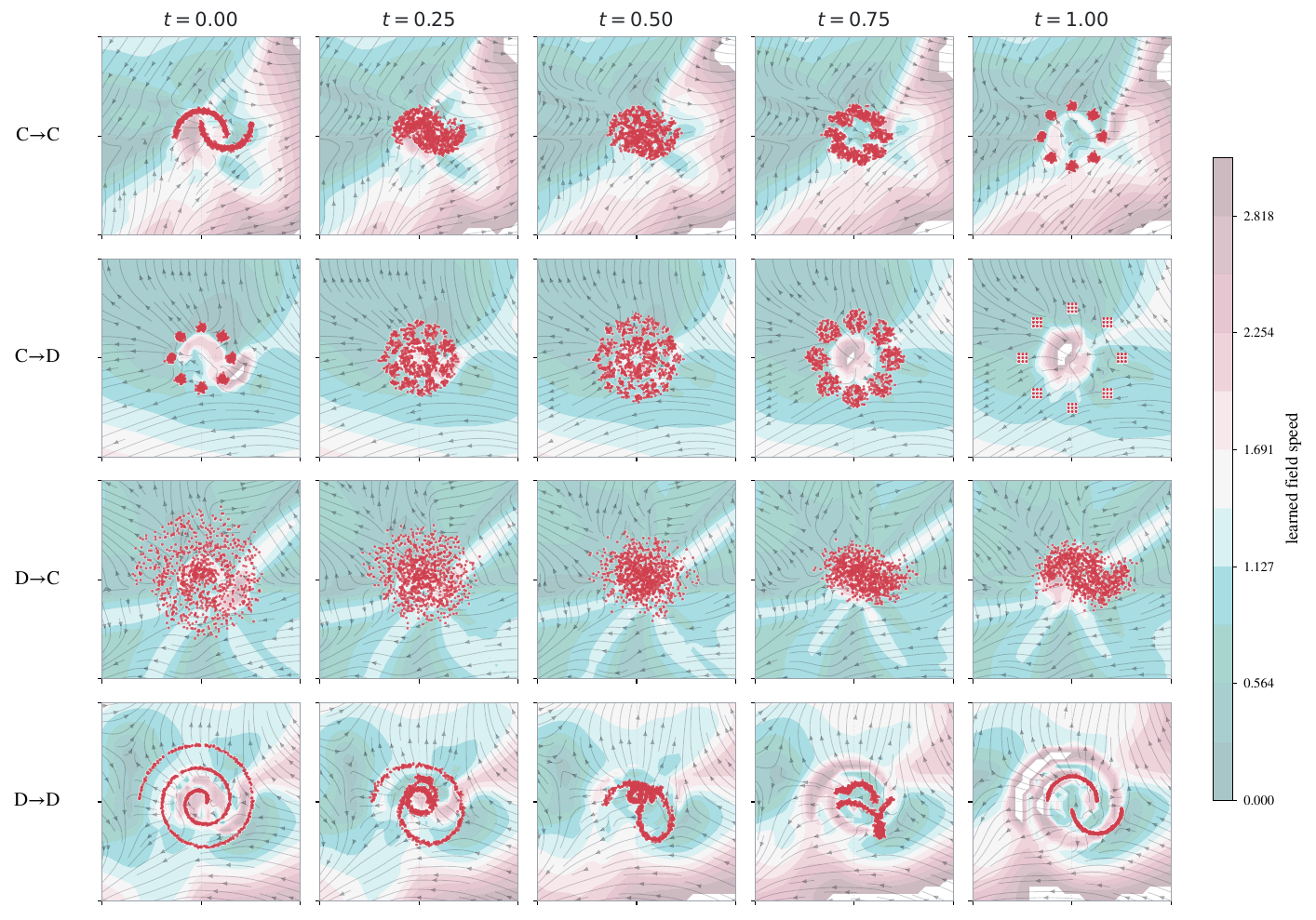}%
        }
      \end{minipage}
      \caption{Controlled transport setup and entropy-induced grids. Left: entropy-rate profiles and induced inverse-CDF time grids show the core mechanism: measure where the path carries information, normalize that rate, and place time nodes by the corresponding cumulative distribution. Right: synthetic transport scenarios used as controlled probes, contrasting continuous--continuous, continuous--discrete, discrete--continuous, and discrete--discrete endpoint structure.}
      \label{fig:controlled_transport_setup_appendix}
    \end{figure}




\begin{figure}[H]
  \centering
  \includegraphics[width=0.78\linewidth]{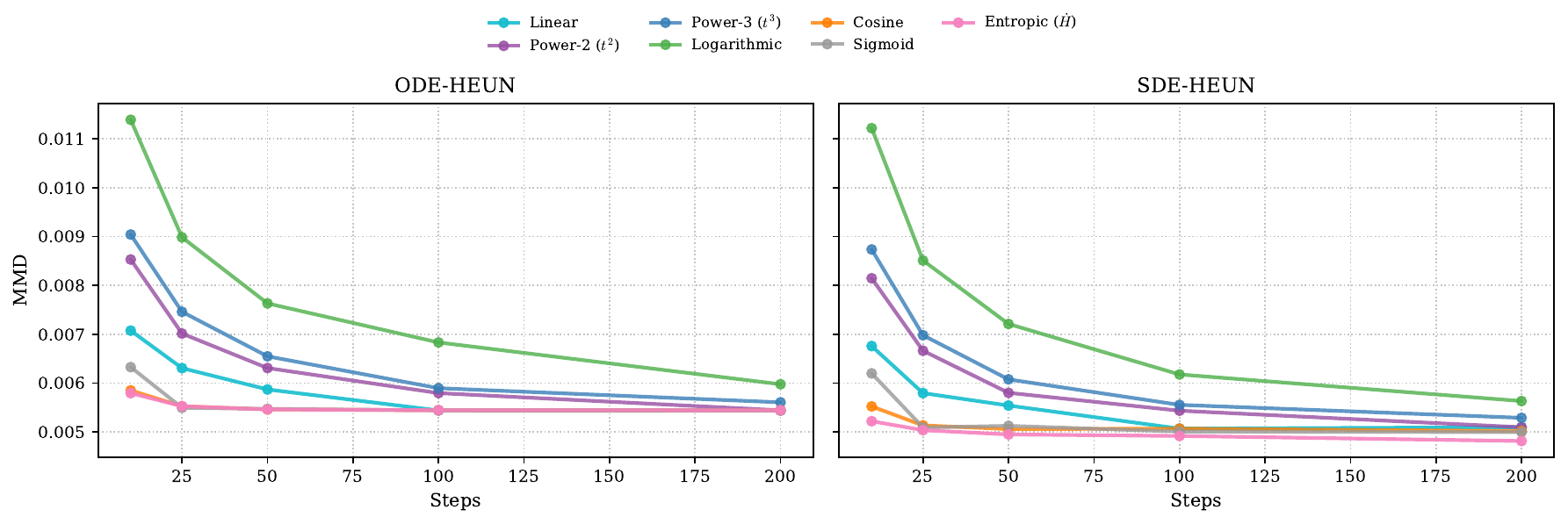}
  \caption{Scenario-level comparison on the toy transports. The figure is included to make the synthetic evidence explicit: the schedule effect depends on the endpoint geometry, and the role of the toy data is to isolate this dependence before moving to EDM and AlphaFlow.}
  \label{fig:scenario_comparison_appendix}
\end{figure}

\begin{figure}[H]
  \centering
  \includegraphics[width=0.78\linewidth]{fig_mmd_vs_steps_ode_sde.pdf}
  \caption{Initial ODE/SDE probing on synthetic data. The main observation is that low-NFE behavior is more sensitive to grid placement than high-NFE behavior, and deterministic integration exposes this sensitivity more directly than stochastic integration.}
  \label{fig:mmd_steps_appendix}
\end{figure}

\begin{figure}[H]
  \centering
  \includegraphics[width=0.78\linewidth]{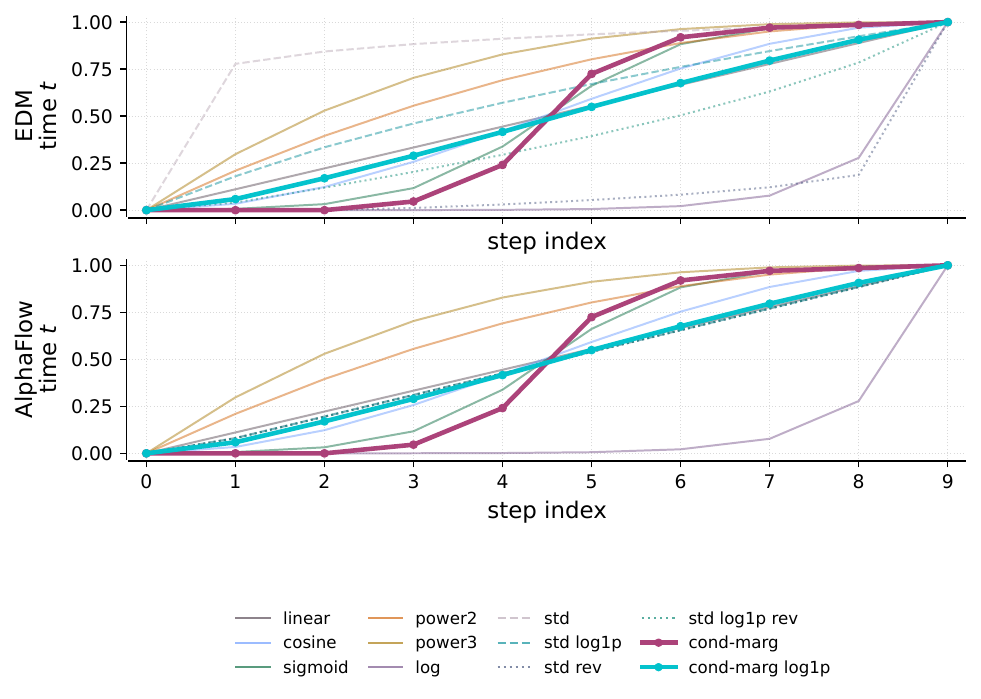}
  \caption{Reference scheduler shapes at $\sigma=0.5$. This plot is kept as a reference for how common heuristic grids distribute nodes relative to the bridge-inspired profiles. It is not used to claim that one heuristic is universally best.}
  \label{fig:scheduler_shapes_appendix}
\end{figure}

  \section{Evaluation Grids}
  \label{app:evaluation_grids}

The following figures are grid and trajectory diagnostics referenced from the main text.

\begin{figure}[t]
  \centering
  \begin{subfigure}[t]{0.49\linewidth}
    \centering
    \includegraphics[width=\linewidth]{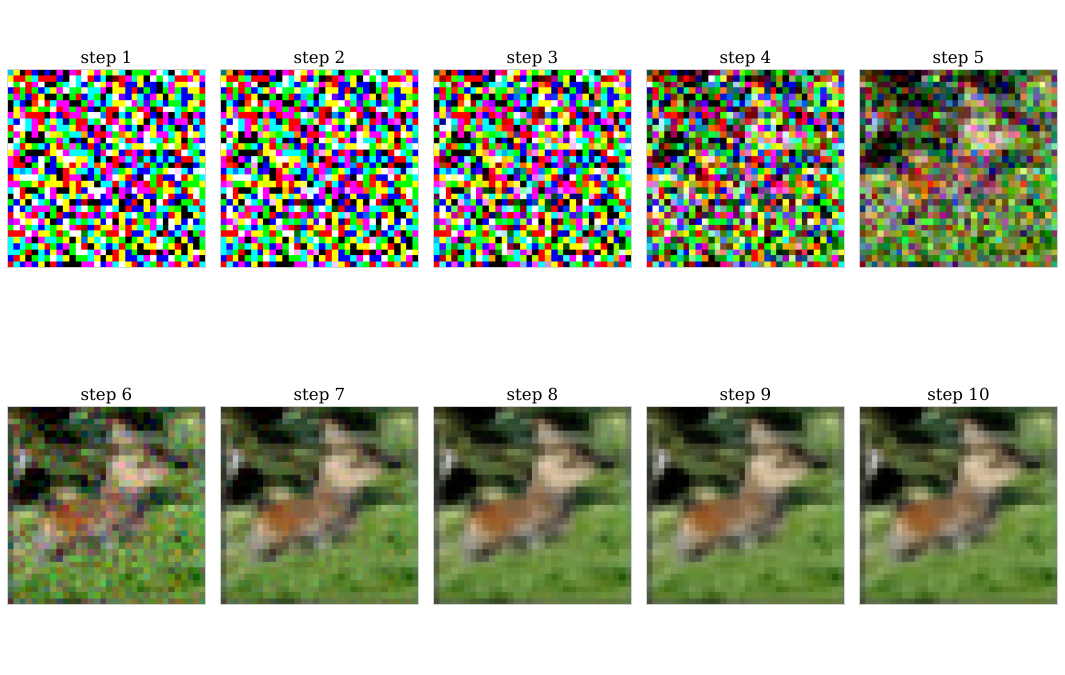}
    \caption{Linear.}
  \end{subfigure}
  \hfill
  \begin{subfigure}[t]{0.49\linewidth}
    \centering
    \includegraphics[width=\linewidth]{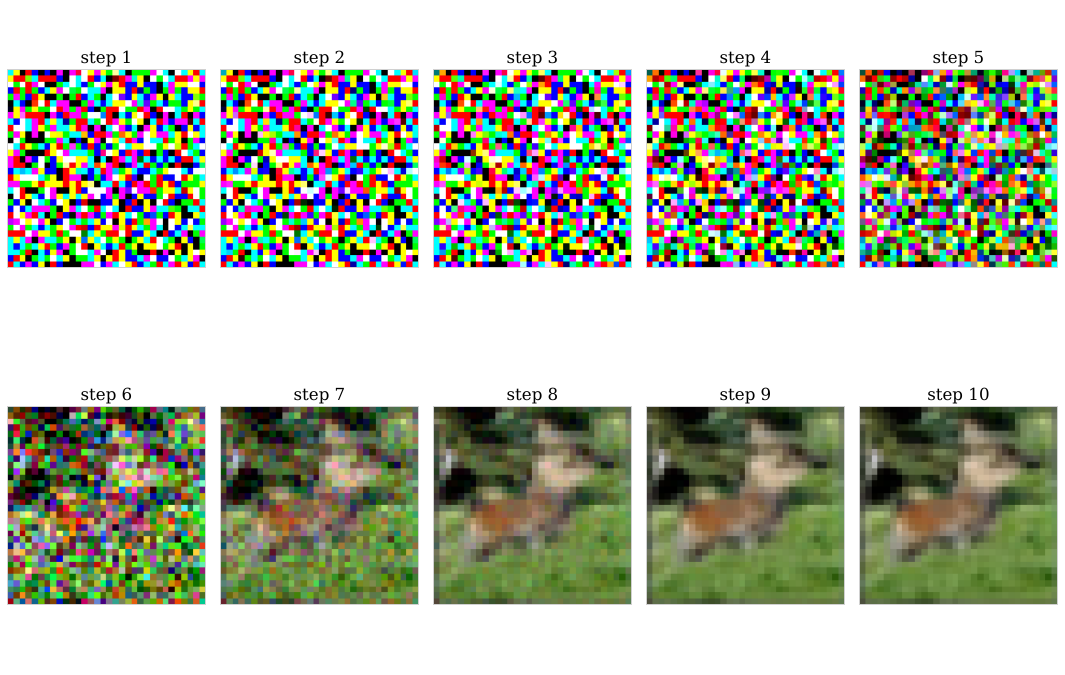}
    \caption{Cond--marg log1p.}
  \end{subfigure}
  \caption{EDM ten-step ODE-Heun trajectory grids. The quantitative five-step claim is carried by \Cref{tab:edm-fid-ode}; Log1p allocation is visually close to linear while remaining a distinct measured grid.}
  \label{fig:edm_images}
\end{figure}



\begin{figure}[H]
  \centering
  \includegraphics[width=0.82\linewidth]{fig_transport_scenarios_grid_tarchic_vector_field_rose_points.pdf}
  \caption{Synthetic transport scenarios used as controlled probes. The grid contrasts continuous--continuous, continuous--discrete, discrete--continuous, and discrete--discrete endpoint structure. The point of this panel is to show that the bridge path can face different endpoint constraints even before any high-dimensional neural sampler is introduced..}
  \label{fig:tarchic_rose_points_appendix}
\end{figure}

\begin{figure}[H]
  \centering
  \includegraphics[width=0.78\linewidth]{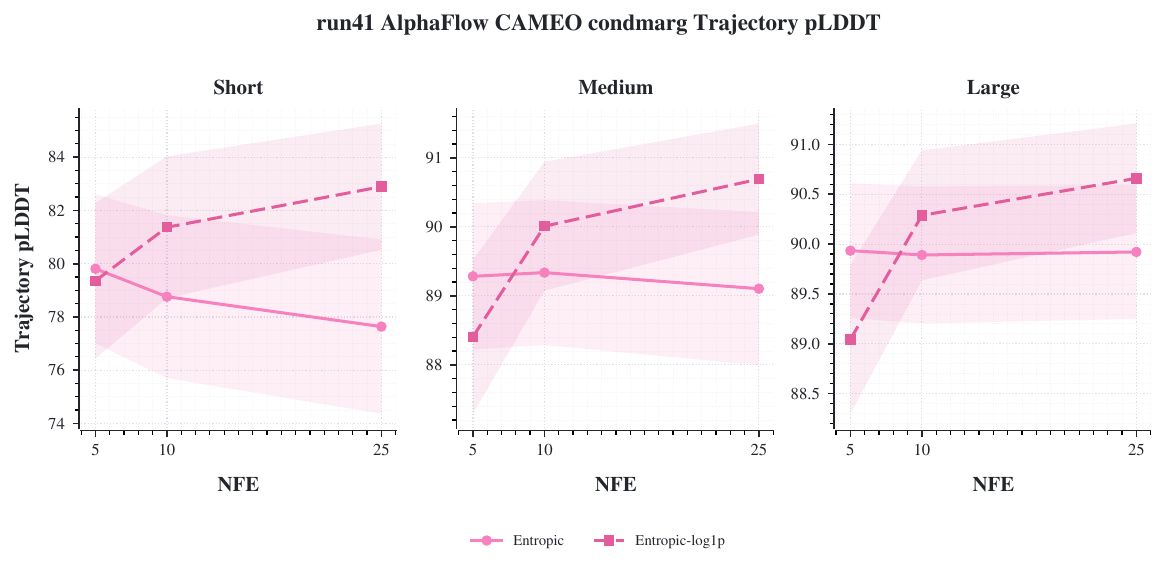}
  \caption{CAMEO AlphaFlow trajectory pLDDT by dataset size. This panel documents endpoint-confidence trends across the trajectory and does not replace the cond--marg schedule diagnostic.}
  \label{fig:cameo_alphaflow_plddt_appendix}
\end{figure}

\begin{figure}[H]
  \centering
  \includegraphics[width=0.98\linewidth]{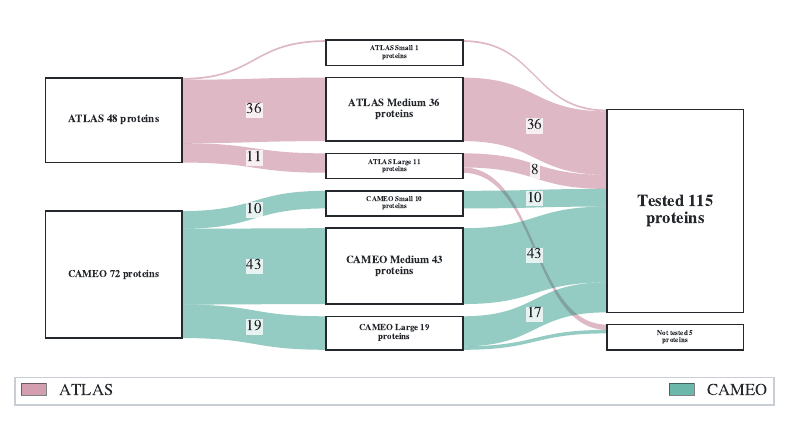}
  \caption{AlphaFlow protein-testing flow summary. This figure records the filtering and evaluation path behind the protein evidence.}
  \label{fig:alphaflow_testing_sankey_appendix}
\end{figure}



\begin{figure}[H]
  \centering
  \includegraphics[width=0.82\linewidth]{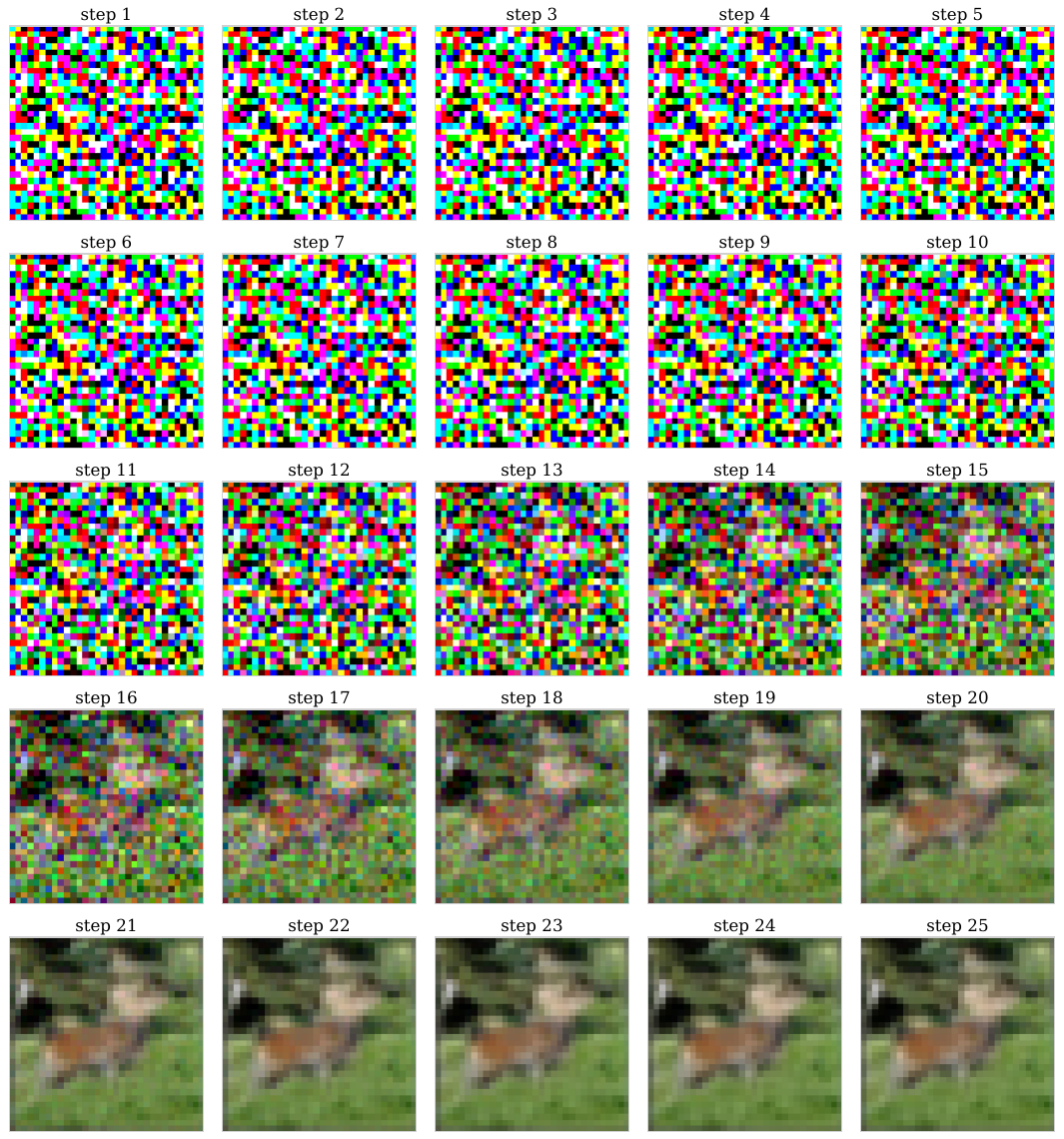}
  \caption{EDM ODE-Heun trajectory grid at 25 steps. The reader should observe that well-placed schedules begin to converge in visual behavior, matching the quantitative convergence in FID.}
  \label{fig:edm_ode_25_appendix}
\end{figure}

\begin{figure}[H]
  \centering
  \begin{subfigure}[t]{0.48\linewidth}
    \centering
    \includegraphics[width=\linewidth]{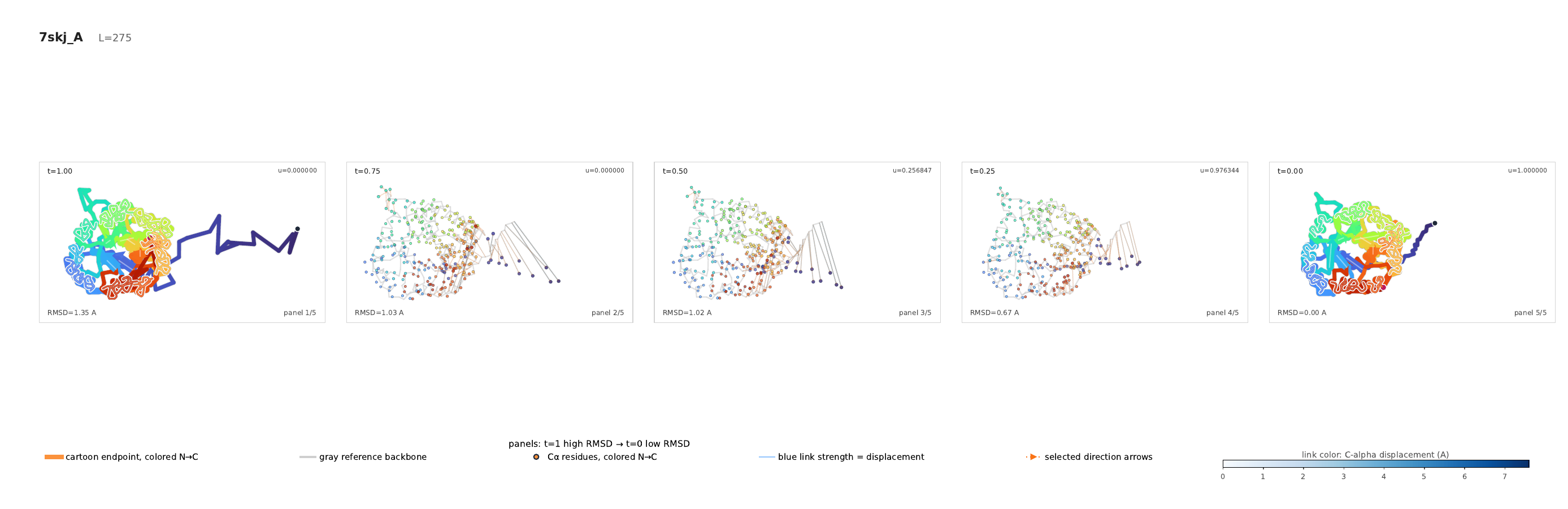}
    \caption{2D view.}
  \end{subfigure}
  \hfill
  \begin{subfigure}[t]{0.48\linewidth}
    \centering
    \includegraphics[width=\linewidth]{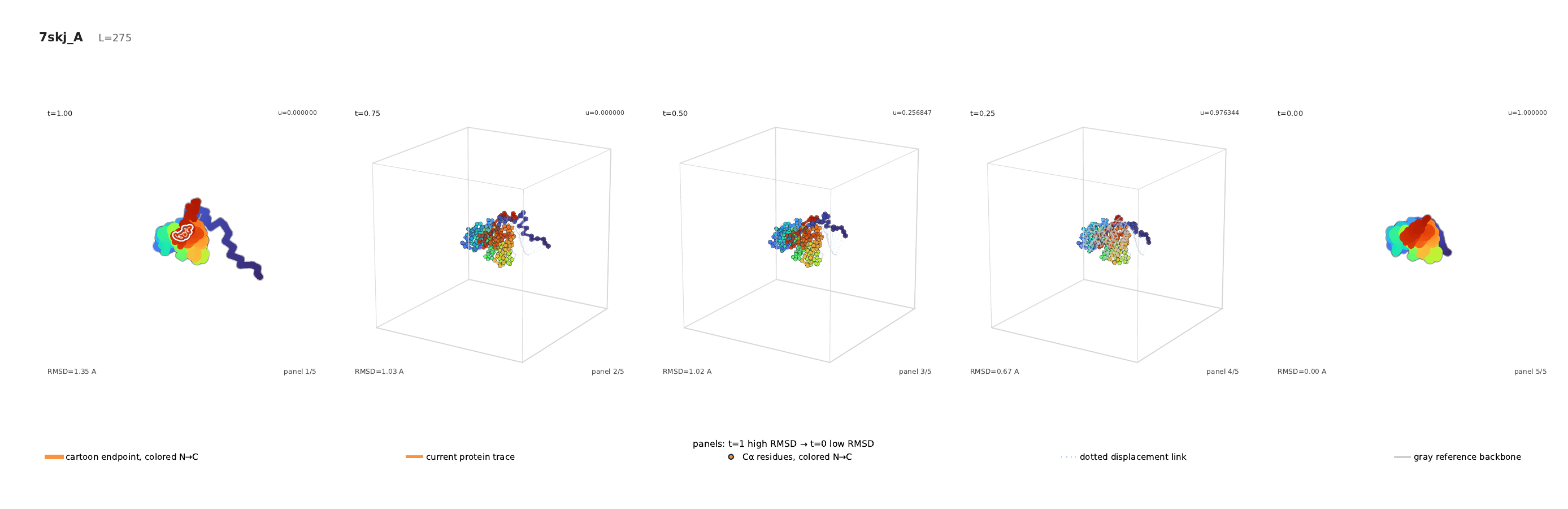}
    \caption{3D view.}
  \end{subfigure}
  \caption{AlphaFlow cond--marg protein evolution at 5 steps for the same target. The paired 2D and 3D renderings are trajectory diagnostics, not endpoint-quality metrics.}
  \label{fig:alphaflow_evolution_5steps_appendix}
\end{figure}

\begin{figure}[H]
  \centering
  \begin{subfigure}[t]{0.48\linewidth}
    \centering
    \includegraphics[width=\linewidth]{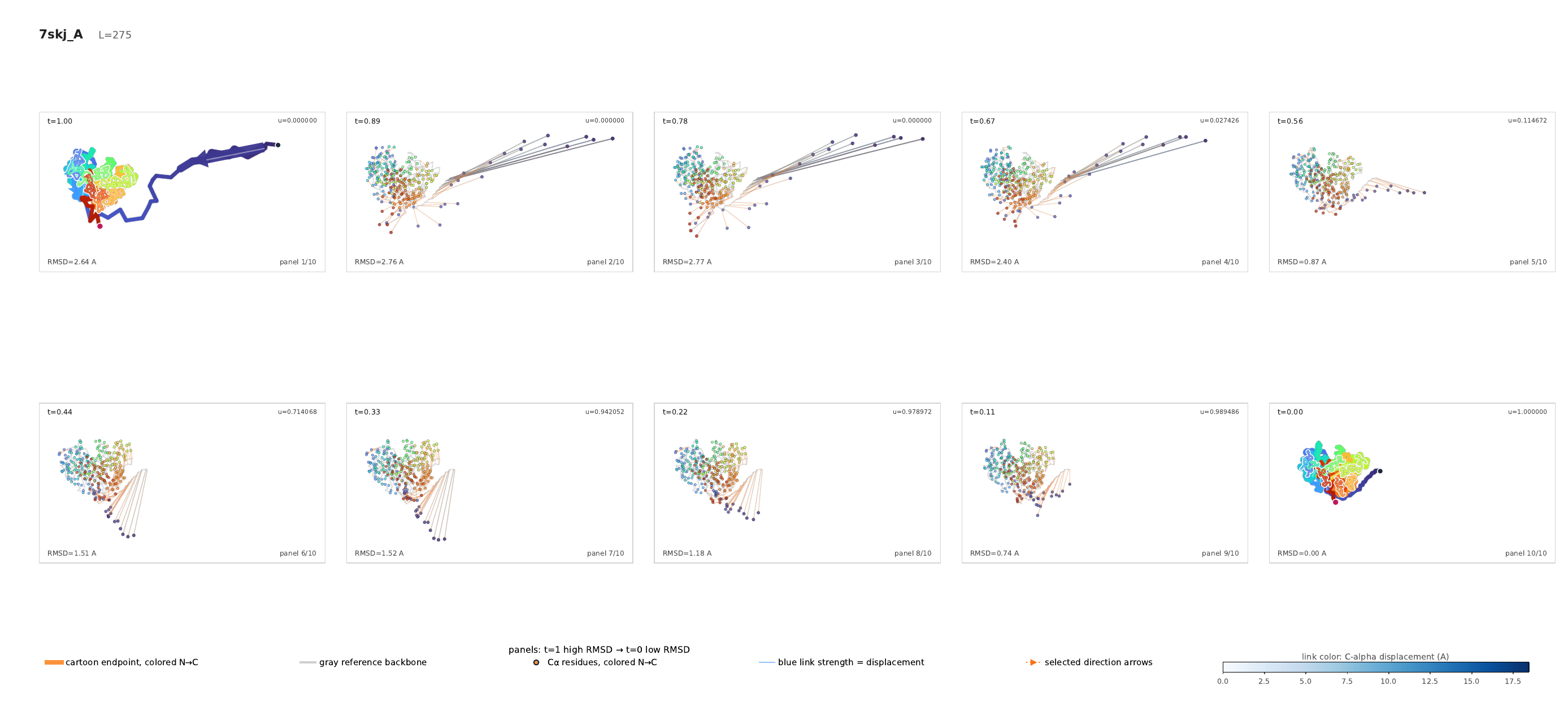}
    \caption{2D view.}
  \end{subfigure}
  \hfill
  \begin{subfigure}[t]{0.48\linewidth}
    \centering
    \includegraphics[width=\linewidth]{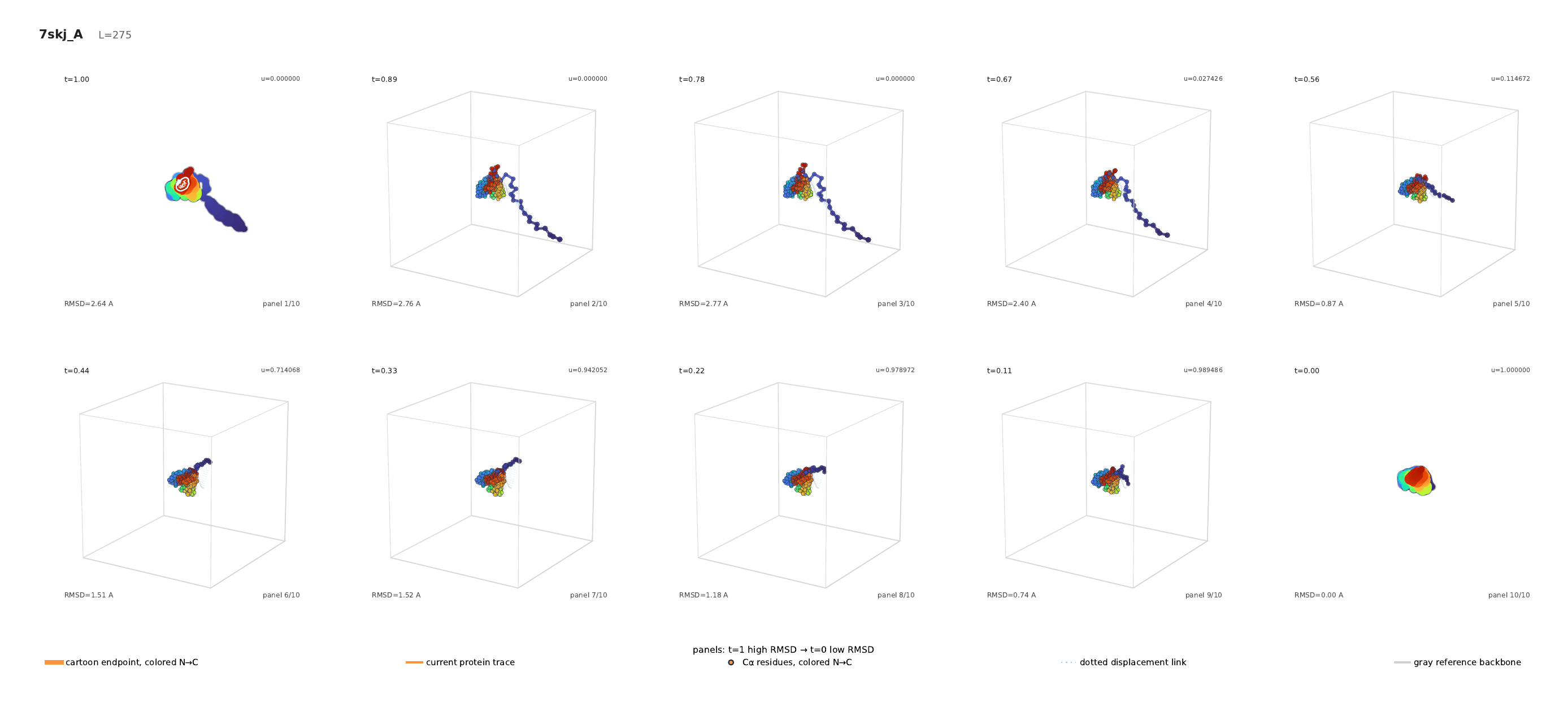}
    \caption{3D view.}
  \end{subfigure}
  \caption{AlphaFlow cond--marg protein evolution at 10 steps for the same target. The paired views make the trajectory easier to inspect without changing the endpoint metric interpretation.}
  \label{fig:alphaflow_evolution_10steps_appendix}
\end{figure}


\begin{figure}[H]
  \centering
  \begin{subfigure}[t]{0.48\linewidth}
    \centering
    \includegraphics[width=\linewidth]{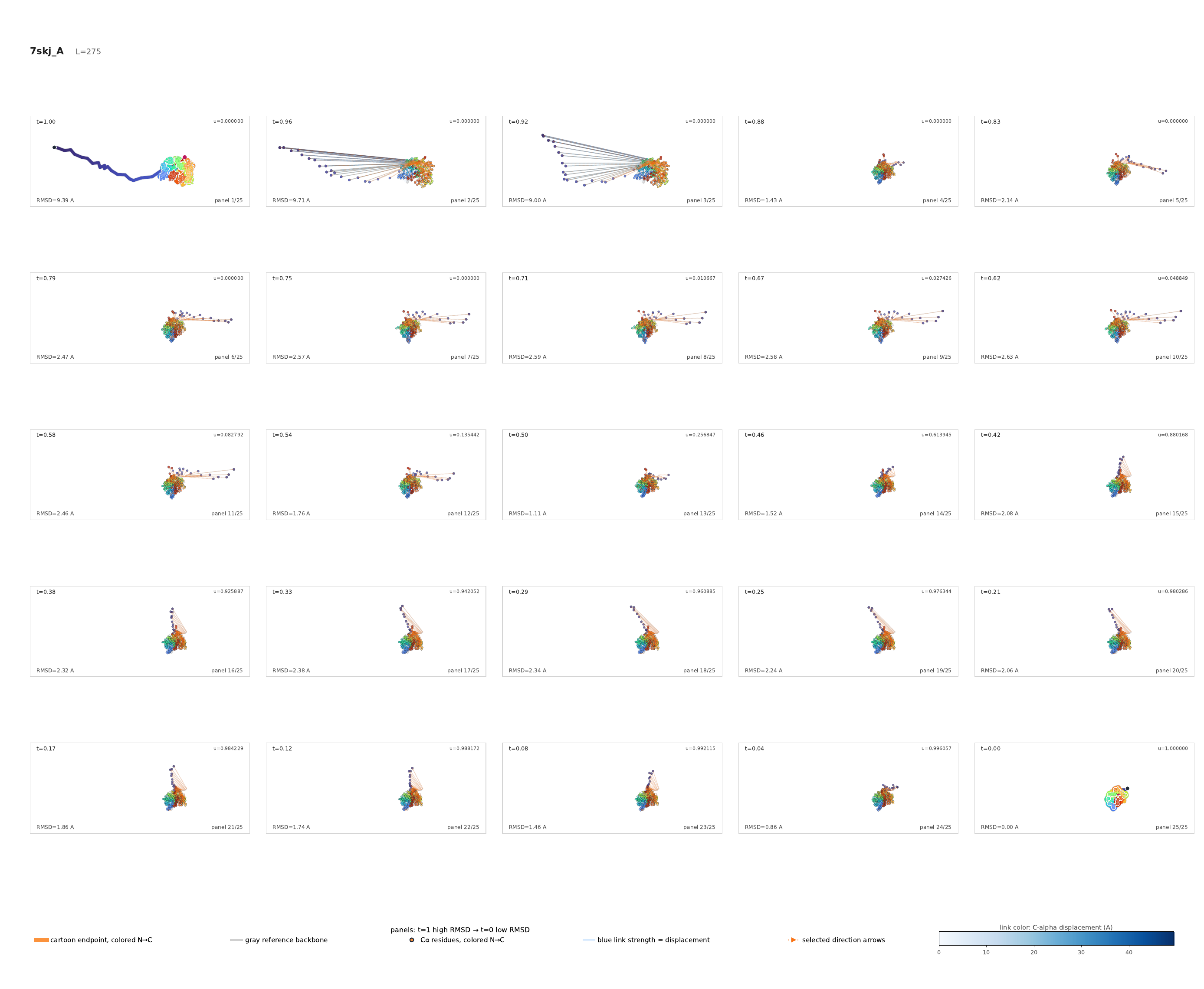}
    \caption{2D view.}
  \end{subfigure}
  \hfill
  \begin{subfigure}[t]{0.48\linewidth}
    \centering
    \includegraphics[width=\linewidth]{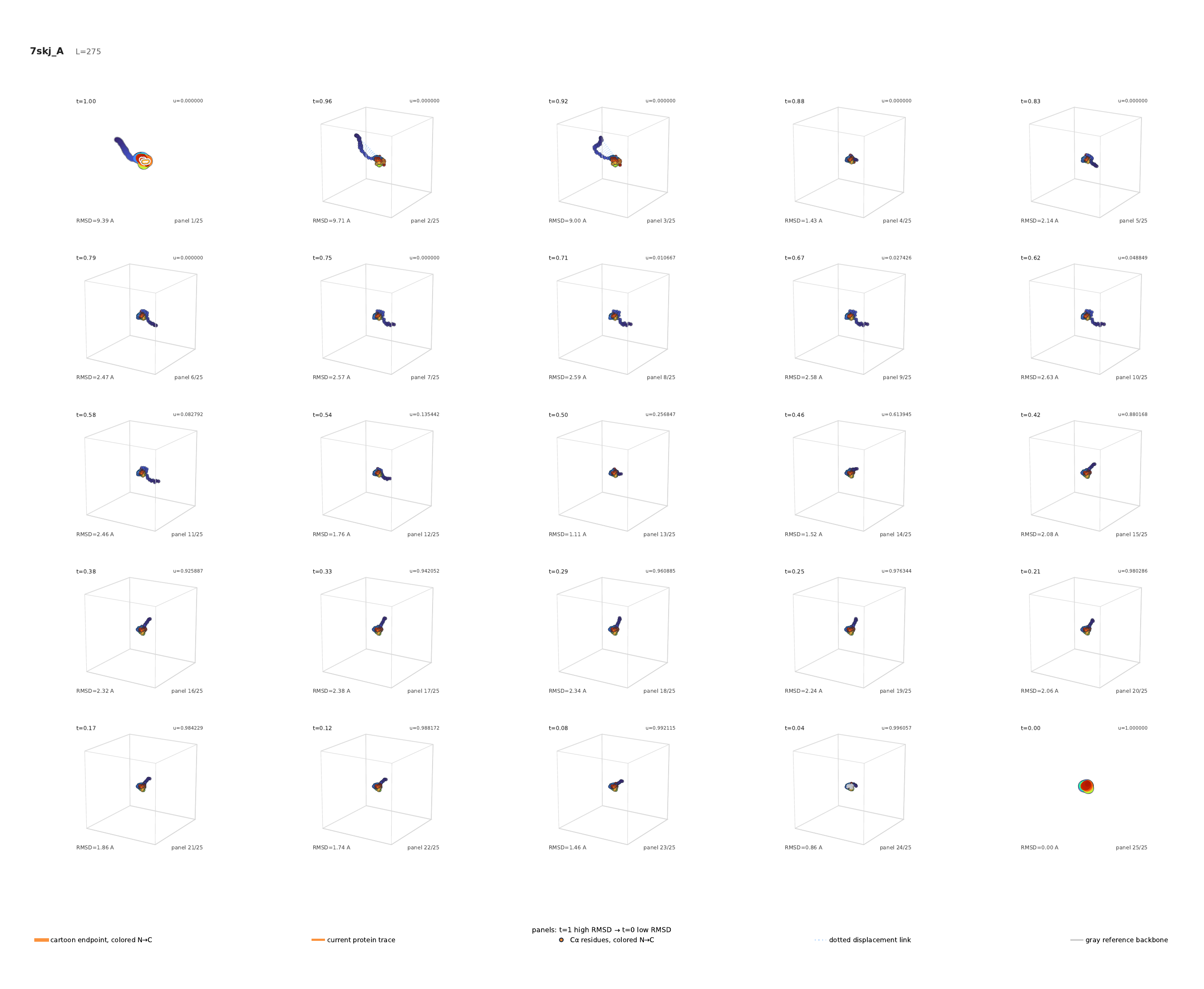}
    \caption{3D view.}
  \end{subfigure}
  \caption{AlphaFlow cond--marg protein evolution at 25 steps for the same target. This higher-NFE view supports trajectory inspection and remains separate from quantitative pLDDT claims.}
  \label{fig:alphaflow_evolution_25steps_appendix}
\end{figure}

\section{Computational Notes}
\label{app:computational}
\paragraph{The inference data for our test runs was collected on a SLURM-managed HPC GPU cluster using NVIDIA H100 80GB HBM3 GPUs}. The main inference sweeps ran as exclusive 8-GPU node jobs, typically requesting gpus-per-node=8 with torchrun-nproc\_per\_node=8, about 80 requested CPU cores per job, 128 CPU cores allocated by SLURM, and roughly 2 TB of node memory. Our runs also included several smaller 1-GPU AlphaFlow precompute jobs with 16 CPUs and about 250 GB allocated memory. Counting allocated GPU wall time, including failed and timed-out retry attempts because they still consumed resources, and excluding CPU-only postprocessing, our runs consumed about 333.67 H100 GPU-hours in total: 183.52 GPU-hours for EDM and 150.15 GPU-hours for AlphaFlow, including experimentation. If the earlier invalid or cancelled launches are included as an all-in accounting number, the estimate rises slightly to about 334.42 H100 GPU-ours.

\paragraph{Computational cost.} Our method is computationally light relative to both EDM and AlphaFlow because it is a scheduler rather than a new generative backbone. At a fixed number of function evaluations (NFE), it uses the same trained-model calls as the corresponding baseline linear, cosine, sigmoid, or power schedule, and only changes the time grid on which those evaluations are placed. The additional online overhead consists of reading or interpolating a precomputed entropy-rate curve and constructing the schedule, which is $\mathcal{O}(K)$ for $K$ sampling steps and negligible compared with a single neural-network evaluation. The offline entropy-curve estimation incurs a one-time cost proportional to the number of time points and probe samples, but this cost is amortized across all subsequent samples and does not change the asymptotic inference complexity.

For EDM, inference is dominated by denoising-network evaluations. If $B$ is the batch size, $K$ is the number of model evaluations, and $C_{\mathrm{EDM}}$ is the cost of one EDM U-Net evaluation at the target image resolution, then sampling costs approximately
\[
\mathcal{O}(B K C_{\mathrm{EDM}}),
\]
up to constant-factor differences between Euler, Heun, and SDE-corrector implementations. Our entropic schedule has the same complexity at matched NFE; any improvement comes from allocating the same evaluations more effectively, or from reaching a target quality with fewer evaluations.

For AlphaFlow, the dominant cost is substantially larger because each denoising or integration step invokes a protein-structure model, here the ESMFold-based AlphaFlow checkpoint. For a protein of length $L$, each model call scales at least quadratically in $L$ due to attention and pairwise geometric representations, with additional architecture-dependent costs in the structure module. AlphaFlow sampling therefore scales roughly as
\[
\mathcal{O}(B K C_{\mathrm{AF}}(L)),
\]
where $C_{\mathrm{AF}}(L)$ grows steeply with protein length and memory also increases strongly with $L$. Our scheduler does not introduce additional AlphaFlow passes; it only changes the temporal allocation of the same 5-, 10-, or 25-step budgets.



\end{document}